\newcommand{\mcA}{\mathcal{A}}
\newcommand{\PP}{\mathbb{P}}
\newcommand{\C}{\mathbb{C}}
\newcommand{\R}{\mathbb{R}}
\newcommand{\rank}{\operatorname{rank}}
\newcommand{\ceil}[1]{\ensuremath{\lceil #1 \rceil}}
\newcommand{\floor}[1]{\ensuremath{\lfloor #1 \rfloor}}
\newcommand {\TP} {\mathbb{TP}}
\newcommand{\un}{\underline{n}}
\theoremstyle{plain}
\newtheorem{theorem}{Theorem}[section]
\newtheorem{corollary}[theorem]{Corollary}
\newtheorem{proposition}[theorem]{Proposition}
\newtheorem{lemma}[theorem]{Lemma}
\newtheorem{remark}[theorem]{Remark}
\newtheorem{conjecture}[theorem]{Conjecture}
\newtheorem{question}[theorem]{Question}
\theoremstyle{definition}
\theoremstyle{definition}
\theoremstyle{definition}
\newtheorem{example}[theorem]{Example}
\title{Geometry of the Restricted Boltzmann Machine}
\author{Mar\'ia Ang\'elica Cueto}
\address{Department of Mathematics, University of California,
Berkeley, CA 94720, USA}
\email{macueto@math.berkeley.edu}
\author{Jason Morton}
\address{Department of Mathematics, Stanford University,
Stanford, CA 94305, USA}
\email{jason@math.stanford.edu}
\author{Bernd Sturmfels} \address{Department of Mathematics,
  University of California, Berkeley, CA 94720, USA}
\email{bernd@math.berkeley.edu} \thanks{Mar\'ia Ang\'elica Cueto was
  supported by a UC Berkeley Chancellor's Fellowship.  Jason Morton
  was supported in part by DARPA grant HR0011-05-1-0007 and NSF grant
  DMS-0354543.  Bernd Sturmfels was supported in part by NSF grants
  DMS-0456960 and DMS-0757236.}
\begin{document}
\keywords{Algebraic statistics, tropical geometry, deep belief network,
Hadamard product, secant variety, Segre variety, inference function, linear 
threshold function} \subjclass[2000]{62E10, 68T05, 14Q15, 51M20}

\begin{abstract}
The restricted Boltzmann machine is a graphical
model for binary random variables.
Based on a complete bipartite graph separating
hidden and observed variables, it is the binary
analog to the factor analysis model.
We study this graphical model from the perspectives of
algebraic statistics and tropical geometry,
starting with the observation that its Zariski
closure is a Hadamard power of the first secant variety
of the Segre variety of projective lines. We derive
a dimension formula for the tropicalized model,
and we use it to show that the restricted Boltzmann machine 
is identifiable in many cases.
Our methods include coding theory and geometry of linear threshold functions.
\end{abstract}

\maketitle

\section{Introduction}
\label{sec:introduction}

A primary focus in  algebraic statistics is the study of
statistical models that can be represented by
polynomials in the model parameters.
This class of algebraic statistical models includes graphical models for
both Gaussian and discrete random variables \cite{DrtonSullivant, Oberwolfach}.
In this article we study a  family of
binary graphical models  with hidden variables.
The underlying graph is the complete bipartite graph $K_{k,n}$:

\begin{figure}[htb]
\begin{center}
\[
%\scalebox{2}{
\begin{xy}<15mm,0mm>:
(0,0) *{\circ};
(0,-1) *{\bullet} **@{-},
(.5,-1) *{\bullet} **@{-},
(1,-1) *{\bullet} **@{-},
(1.5,-1) *{\bullet} **@{-},
(2,-1) *{\bullet} **@{-},
(1,0) *{\circ};
(0,-1) *{\bullet} **@{-},
(.5,-1) *{\bullet} **@{-},
(1,-1) *{\bullet} **@{-},
(1.5,-1) *{\bullet} **@{-},
(2,-1) *{\bullet} **@{-},
(2,0) *{\circ};
(0,-1) *{\bullet} **@{-},
(.5,-1) *{\bullet} **@{-},
(1,-1) *{\bullet} **@{-},
(1.5,-1) *{\bullet} **@{-},
(2,-1) *{\bullet} **@{-},
%labels
(-1.5,-.35) *{\text{state}};
(-1.5,-.65) *{\text{vectors}};
(-.5,0) *{h};
(-.5,-1) *{v};
%(1,.5) *{m \;\text{hidden units}};
%(1,-1.5) *{n \;\text{visible units}};
%Braces trick see http://www.tug.org/pipermail/xy-pic/2003-November/000236.html
(0,.3).(2,.3)!C *\frm{^\}},+U*++!D\txt{$k$ hidden variables};
(0,-1.3).(2,-1.3)!C *\frm{_\}},+U*++!U\txt{$n$ observed variables};
(4,-.5) *{\text{parameters}};
(2.5,0) *{c};
(2.5,-1) *{b};
(2.5,-.5) *{W};
\end{xy}
%}
\]
\end{center}
\caption{Graphical representation of the restricted Boltzmann machine.} \label{fig:RBM}
\end{figure}
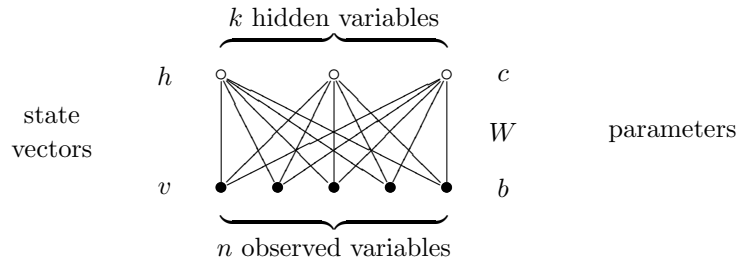

The $k$ white nodes in the top row of Figure \ref{fig:RBM}  
represent hidden random variables. The
 $n$ black nodes in the bottom row
represent observed random variables.
The {\em restricted Boltzmann machine}  (RBM)
is the undirected graphical model for binary random variables
specified by this bipartite graph. We identify the model with its set of joint distributions 
which is a subset $M_n^k$ of the probability simplex $\Delta_{2^n-1}$.

The graphical model for Gaussian random variables represented
by Figure \ref{fig:RBM} is the
{\em factor analysis} model, whose algebraic properties were studied in
\cite{BD, FactorAnalysis}. Thus, the restricted Boltzmann machine is the 
binary undirected analog of factor analysis. Our aim here is to study this 
model from the perspectives of algebra and geometry. Unlike in the factor
analysis study \cite{FactorAnalysis}, an important role will now be played by
{\em tropical geometry}  \cite{TGSM}. This was already seen for  $n=4$ and $k=2$ in
the solution by Cueto and Yu \cite{Cueto}
of the implicitization challenge  in 
\cite[Problem 7.7]{Oberwolfach}.

The restricted Boltzmann machine has been the subject of a recent
resurgence of interest due to its role as the building block of the
deep belief network.  Deep belief networks are designed to learn
feature hierarchies to automatically find high-level representations
for high-dimensional data.  A deep belief network comprises a stack of
restricted Boltzmann machines.  Given a piece of data (state of the
lowest visible variables), each layer's most likely hidden 
states are treated as data for the next layer. A new effective training
methodology for deep belief networks, which begins by training each
layer in turn as an RBM using contrastive divergence, was introduced
by Hinton et al. \cite{Hinton2006}. This method led to many new
applications in general machine learning problems including object
recognition and dimensionality reduction \cite{HintonScience}.
While promising for practical applications, 
the scope and basic properties of these statistical models have
only begun to be studied.  For example, Le Roux and Bengio
\cite{LeRoux2008} showed that any distribution with support on $r$ visible states may be 
arbitrarily well approximated provided there are at least $r+1$
hidden nodes. Therefore, any distribution can be approximated with $2^n+1$ hidden nodes. 

The question which started this project is
whether the restricted Boltzmann machine is identifiable.
The dimension of the fully observed binary graphical model on
 $K_{k,n}$ is equal to  $nk+n+k$,  the number of nodes plus
 the number of edges.
 We conjecture that this dimension is preserved under the
 projection corresponding to the algebraic elimination of the $k$ hidden variables.
 Here is the precise statement:

\begin{conjecture}
\label{conj:main} 
 The restricted Boltzmann machine has the expected dimension, i.e.~$M_n^k$ is a semialgebraic set of dimension 
  ${\rm min}\{nk+n+k,2^n-1\}$ in $\Delta_{2^n-1}$.
   \end{conjecture}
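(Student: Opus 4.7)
The plan is to reduce the conjecture to computing the dimension of the Zariski closure $\overline{M_n^k}$, which the abstract identifies as the $k$-th Hadamard power of the first secant variety of the Segre variety $(\PP^1)^n \hookrightarrow \PP^{2^n-1}$. Marginalizing $h$ out of $p(v,h) \propto \exp(v^\top W h + b^\top v + c^\top h)$ gives
\[
p(v) \;\propto\; e^{b^\top v} \prod_{j=1}^k \bigl(1 + e^{c_j + \sum_i W_{ij} v_i}\bigr),
\]
and after substituting monomial coordinates $x_i = e^{b_i}$, $y_j = e^{c_j}$, $z_{ij} = e^{W_{ij}}$, each factor parametrizes a point of the first secant of the Segre, with the $b$-piece absorbed as a Segre point. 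This accounts for $n + k(n+1) = nk+n+k$ parameters, so the conjecture is that the parametrization has no hidden algebraic dependencies beyond the obvious one that caps the dimension at $2^n-1$.

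To verify this it suffices to show that the Jacobian of the parametrization attains rank $\min\{nk+n+k,\,2^n-1\}$ at one point, since $M_n^k$ is then a full-dimensional semialgebraic subset of its Zariski closure inside $\Delta_{2^n-1}$. I would attack this via tropical geometry, invoking the equality $\dim \Trop(V) = \dim V$ for irreducible $V$. Under tropicalization Hadamard products become Minkowski sums and first secants become unions of tropical lines, so the tropical model is the image of the map sending $(W,b,c)$ to the piecewise linear function $b^\top v + \sum_j \max(0,\, c_j + \sum_i W_{ij} v_i)$ on the cube $\{0,1\}^n$. Its tangent space at a generic parameter is spanned by the gradients of this parametrization across the cells cut out by the hyperplane arrangements $\{c_j + \sum_i W_{ij} v_i = 0\}_{j=1}^k$, so the dimension reduces to the rank of an $(nk+n+k) \times 2^n$ matrix assembled from linear threshold sign patterns on the Boolean cube.

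The main obstacle will be showing that this combinatorial rank reaches the expected maximum for \emph{every} pair $(n,k)$, not only small or generic ones. One must exhibit explicit weights whose induced threshold patterns produce enough linearly independent gradient rows: when $nk+n+k \leq 2^n-1$ this becomes a question of constructing sufficiently independent affine arrangements on the cube, while when $nk+n+k \geq 2^n-1$ one must verify that $k$ hidden units already produce a spanning family of indicator vectors, complementing the approximation bound of Le Roux--Bengio \cite{LeRoux2008}. Quantifying linear dependencies among threshold functions on $\{0,1\}^n$ is classically delicate, and I expect the genuinely hard regime to be the transition where $nk+n+k$ is comparable to $2^n-1$; handling it will likely require either an explicit inductive construction on $k$ or a new combinatorial input from coding theory, which is presumably why the stronger claim remains a conjecture rather than a theorem at this stage.
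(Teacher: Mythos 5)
Your plan is essentially the route the paper itself takes: pass to the tropical model, observe that on each region of linearity the parametrization is given by a matrix assembled from linear threshold (slicing) patterns on the cube (the paper's Theorem \ref{thm:cornercuts}), and then try to exhibit slicings achieving the maximal rank, with coding theory (Hamming-type packings and coverings) supplying the constructions in the regimes $k \leq 2^{n-\ceil{\log_2(n+1)}}$ and $k \geq 2^{n-\floor{\log_2(n+1)}}$. Like the paper, this yields only the partial result of Theorem \ref{thm:looksreasonable}; your assessment that the transition regime where $nk+n+k$ is comparable to $2^n-1$ is the genuine obstruction matches exactly why the statement remains a conjecture rather than a theorem.
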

 
This conjecture is shown to be true in many special cases. In particular, it
holds for all $k$ when $n+1$ is a power of $2$. This is a consequence
of the following:

\begin{theorem}
\label{thm:looksreasonable}
 The restricted Boltzmann machine has the expected dimension
 ${\rm min}\{nk+n+k,2^n-1\}$  when $k \leq 2^{n-\ceil{\log_2 (n+1)}}$ and when
 $k \geq 2^{n-\floor{\log_2 (n+1)}}$.
\end{theorem}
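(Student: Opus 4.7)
The plan is to handle the two ranges of $k$ separately, in each case establishing a dimension lower bound to match the trivial upper bound $\dim M_n^k \leq \min\{nk+n+k, 2^n-1\}$ (parameter count versus ambient simplex).

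For the small range $k \leq 2^{n-\ceil{\log_2(n+1)}}$, I would invoke the tropicalization dimension formula advertised in the abstract, which reduces the question to an explicit rank computation on a combinatorial Jacobian. The Jacobian columns decompose by hidden unit: unit $j$ contributes $n+1$ columns consisting of the indicator $\chi_{B_j}$ of its active half-space $B_j = \{v \in \{0,1\}^n : c_j + \sum_i w_{ij} v_i \geq 0\}$ and the products $v_i \chi_{B_j}$ for $i=1,\dots,n$, while the visible biases contribute $n$ global columns. To achieve the full rank $nk+n+k$, I would choose $k$ linear threshold functions whose active sets $B_1,\dots,B_k$ are pairwise disjoint and each affinely spanning, so that the $n+1$ per-unit columns restrict to linearly independent functions on $B_j$ and disjointness promotes these to a global independence. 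The bound $k \leq 2^{n-\ceil{\log_2(n+1)}}$ is calibrated precisely so that $\{0,1\}^n$ can be tiled by $k$ such linearly separable blocks.

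For the large range $k \geq 2^{n-\floor{\log_2(n+1)}}$, the target is $\dim M_n^k = 2^n-1$. I would use the trivial monotonicity $M_n^k \subseteq M_n^{k+1}$ (obtained by appending a hidden unit with zero weights and bias, yielding an independent uniform factor in the marginal over $v$) to reduce to the threshold $k_0 := 2^{n-\floor{\log_2(n+1)}}$. Here $k_0(n+1) \geq 2^n$, so there is just enough parameter room. Repeating the Jacobian construction, but now choosing $k_0$ linear threshold functions whose active sets partition $\{0,1\}^n$ into balanced blocks of size at most $n+1$ each containing an affine basis, makes the union of block columns together with the visible-bias columns span the full function space $\R^{2^n}$. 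Modding out the single normalization relation yields model dimension $2^n-1$ at $k = k_0$, which monotonicity then propagates to all larger $k$.

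The main obstacle is the combinatorial step in each case: producing a partition of $\{0,1\}^n$ into linearly separable blocks whose restrictions support the required independence of Jacobian columns. Naive axis-aligned subcube partitions are inadequate, because on a subcube of dimension $d$ the coordinate $v_i$ becomes constant in the $n-d$ fixed directions, collapsing the expected $n+1$ block columns down to only $d+1$. One must therefore work with non-axis-aligned threshold functions, such as balanced Hamming-weight cuts, whose active sets contain an affine basis. Arranging $k$ such functions so that their active sets exactly tile $\{0,1\}^n$ is where the coding-theoretic machinery and the geometry of linear threshold functions advertised in the abstract do the real work; the gap between $\ceil{\log_2(n+1)}$ and $\floor{\log_2(n+1)}$ in the hypotheses is precisely the regime where this tiling question remains delicate, leaving Conjecture \ref{conj:main} open in general.
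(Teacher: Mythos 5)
Your overall architecture is the paper's: you tropicalize, observe that on each region of linearity the parameterization becomes a linear map whose columns split into $n$ global columns (visible biases) plus, for each hidden unit, the $n+1$ columns $\chi_{B_j}$ and $v_i\chi_{B_j}$, and you reduce the theorem to exhibiting slicings $B_1,\dots,B_k$ of the cube making this matrix have rank $\min\{nk+n+k,2^n-1\}$. This is exactly Lemma \ref{lem:matrixForTropicalMorphism} and Theorem \ref{thm:cornercuts}. The gap is that you stop precisely where the proof begins: you never produce the slicings, deferring to ``coding-theoretic machinery'' that you do not supply, and the hints you do give point the wrong way. The construction that makes the two exponents in the statement appear is this: take the radius-one Hamming ball $B(c,1)=\{c,c+e_1,\dots,c+e_n\}$ around a codeword $c$. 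Since $d_H(v,c)$ is an affine function of $v$, this ball is a slicing; its $n+1$ points are an affine basis of $\R^n$; and two such balls are disjoint iff the codewords are at Hamming distance $\geq 3$. A code of minimum distance $3$ thus gives $k=A_2(n,3)$ pairwise disjoint, affinely spanning slicings (rank $nk+n+k$ by block structure), and the Varshamov bound $A_2(n,3)\geq 2^{n-\ceil{\log_2(n+1)}}$ gives the first range. A covering code of radius $1$ gives balls whose union is all of $\{0,1\}^n$; since the $n+1$ columns attached to each ball span \emph{every} function supported on that ball, a covering already forces the columns to span $\R^{2^n}$, and the bound $K_2(n,1)\leq 2^{n-\floor{\log_2(n+1)}}$ (obtained by laying overlapping copies of the next smaller Hamming code on faces of the cube) gives the second range.

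Your specific combinatorial suggestions would fail. For small $k$ you say the bound is ``calibrated so that $\{0,1\}^n$ can be tiled'': no tiling is needed or generally possible (a perfect tiling by radius-one balls exists only when $n+1$ is a power of $2$); what is needed is a \emph{packing}, i.e.\ disjointness, which is where minimum distance $3$ enters. For large $k$ you propose a \emph{partition} into ``blocks of size at most $n+1$ each containing an affine basis'' --- but a subset of $\R^n$ containing an affine basis has at least $n+1$ points, so all blocks would have size exactly $n+1$ and the partition would require $(n+1)\mid 2^n$, which fails whenever $n+1$ is not a power of $2$. The fix is to allow the blocks to \emph{overlap} (a covering, not a partition), which costs nothing in the rank argument. ``Balanced Hamming-weight cuts'' are likewise not the right objects: their active sets cannot be made disjoint in the required numbers, and it is the Hamming balls, not weight slices, that simultaneously are slicings, affinely span, and interface with $A_2(n,3)$ and $K_2(n,1)$. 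Your monotonicity reduction $M^k_n\subseteq M^{k+1}_n$ is correct and harmless, but without the explicit code-theoretic construction the proof of the theorem is not yet there.
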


We note that Theorem \ref{thm:looksreasonable} covers most cases of restricted Boltzmann machines as used in practice, as those generally satisfy $\,k \leq 2^{n-\ceil{\log_2 (n+1)}}$.
The case of  large $k$ is primarily of theoretical interest
and has been studied recently in \cite{LeRoux2008}. 

This paper is organized as follows.  In Section~2 we introduce four
geometric objects, namely, the RBM model, the RBM variety, the
tropical RBM model, and the tropical RBM variety, and we formulate a
strengthening of Conjecture \ref{conj:main}.  Section~3 is concerned
with the case $k=1$.  Here the RBM variety is the variety of secant
lines of the Segre variety $(\PP^1)^n \subset \PP^{2^n-1}$.  The
general case $k > 1$ arises from that secant variety by way of a
construction we call the {\em Hadamard product of projective
  varieties}, as shown in Proposition \ref{prop:Hadamard}.  In Section~4 we analyze the tropical RBM model, we establish a formula for its
dimension (Theorem \ref{thm:cornercuts}), and we draw on results from
coding theory to derive Theorem \ref{thm:looksreasonable} and Table
\ref{tab:knownspecialcases}.  In Section~5 we study the
piecewise-linear map that parameterizes the tropical RBM model.  The
{\em inference functions} of the model (in the sense of
\cite{InferenceFunctions, TGSM}) are $k$-tuples of {\em linear
  threshold functions}.  We discuss the number of these functions.
  Figure \ref{fig:TM^1_3}
  shows the combinatorial structure of the
tropical RBM model for $n{=}3$ and $k{=}1$.

\section{Algebraic Varieties, Hadamard Product and Tropicalization}
\label{sec:algebr-vari-hadam}

We begin with an alternative definition of the restricted Boltzmann machine.
This ``machine'' is a statistical model for binary random variables where $n$ of the variables
are visible and $k$ of the variables are hidden. The states of the hidden 
and visible variables  are written as binary 
vectors $h \in \{0,1\}^k$ and $v \in \{0,1\}^n$ respectively.
We introduce $nk+n+k$ model parameters, namely, 
the entries of a real $k \times n$ matrix $W$ and
the entries of two vectors $b \in \R^n$ and $c \in \R^k$, and we set
\begin{equation}
\label{eq:psi}
 \psi(v,h) \,\,\, = \,\,\, \exp(h^{\top}Wv + b^{\top}v + c^{\top}h) .
 \end{equation}
The probability distribution on the visible random variables in our model equals
\begin{equation}
\label{eq:distribution}
p(v)\,\,\, =  \,\,\,\, \frac{1}{Z} \,\cdot \!\! \sum_{h \in \{0,1\}^k} \psi(v,h),
\end{equation}
 where $Z= \sum_{v,h} \psi(v,h)$ is the {\em partition function}.
  We denote by $M_n^k$ the subset of the open probability simplex
   $ \Delta_{2^n-1}$ consisting of all such distributions 
   $(p(v):v \in \{0,1\}^n)$ as the parameters $W,b$ and $c$ run over $\R^{k \times n}$,
   $\R^{n}$ and $\R^k$ respectively.
   
   In what follows we refer to $M_n^k$ as the {\em RBM model} with $n$ visible nodes and $k$ hidden nodes. It coincides with the binary graphical model associated with the complete bipartite graph 
   $K_{k,n}$ as  described in the Introduction. This is indicated
    in Figure \ref{fig:RBM} by   the labeling with the states
 $v, h$ and the model parameters $c,W,b$.
  
 The parameterization in (\ref{eq:psi}) is not polynomial because it involves
 the exponential function. However, it is equivalent to the polynomial parameterization
 obtained by replacing each model parameter by its value under the exponential function:
 $$  \gamma_i = {\rm  exp}(c_i) \,,\,\,\,
 \omega_{ij} = {\rm exp}(W_{ij}) \,,\,\,\,
 \beta_j = {\rm exp}(b_j) .$$
 This coordinate change translates (\ref{eq:psi}) into the squarefree monomial
 $$ \psi(v,h) \,\,\, = \,\,\,
\prod_{i=1}^k \gamma_i^{h_i}
\cdot 
  \prod_{i=1}^k \prod_{j=1}^n \omega_{ij}^{h_i v_j}
  \cdot
  \prod_{j=1}^n \beta_j^{v_j},
  $$
  and we see that the probabilities in (\ref{eq:distribution}) can be factored as follows:
 \begin{equation}    \label{eq:distribution2}
    p(v) \,\,\ = \,\,\,\,
  \frac{1}{Z} \, 
   \beta_1^{v_1}\beta_2^{v_2} \cdots \beta_n^{v_n}
   \prod_{i=1}^k \bigl(1  + \gamma_i \,
   \omega_{i1}^{v_1}
    \omega_{i2}^{v_2}\cdots
    \omega_{in}^{v_n} \bigr) 
    \qquad \hbox{for} \quad v \in \{0,1\}^n.
    \end{equation}
The RBM model $M^k_n$ is the image of the  polynomial map
   $\,   \R_{>0}^{nk+k+n} \rightarrow \Delta_{2^n-1}\,$
   whose $v$th coordinate equals (\ref{eq:distribution}).
   This shows that $M^k_n$ is a semialgebraic subset of $\Delta_{2^n-1}$.

\smallskip

When faced with a high-dimensional semialgebraic set arising in statistics, it is often
useful to simplify the situation by disregarding all inequalities and by
replacing the real numbers $\R$ by the complex numbers $\C$.
This leads us to considering the Zariski closure  $V^k_n$ of the RBM model $M^k_n$. 
This is the algebraic variety in the complex projective space $\mathbb{P}^{2^n-1}$
parameterized by (\ref{eq:distribution2}).
We call $V^k_n$ the {\em RBM variety}.

\medskip

Given any two subvarieties $X$ and $Y$ of a projective space $\mathbb{P}^m$, we define
their {\em Hadamard product} $X * Y$ to be the closure of the image of the rational map
$$ X \times Y \dashrightarrow \mathbb{P}^m \,,\,\,
 (x,y) \mapsto  (x_0 y_0 : x_1 y_1 : \ldots : x_m y_m ) .$$ 
 For any projective variety $X$, we may consider its Hadamard square
$X^{[2]} = X * X$ and its higher Hadamard powers
$ X^{[k] } \, = \,X * X^{[k-1]}$. 
If $M$ is a subset of the open simplex $\Delta_{m-1}$
then its Hadamard powers $M^{[k]}$ are also defined
by componentwise multiplication followed by rescaling so that the
coordinates sum to one. This construction is compatible with
taking Zariski closures, i.e.~we have
$\overline{M^{[k]}} = 
\overline{M}^{[k]}$.

In the next section we shall take a closer look at the case $k=1$, and we shall recognize
$V^1_n $ as a secant variety and $M^1_n$ as a phylogenetic model. Here, we prove
that the case of $k > 1$ hidden nodes reduces to $k=1$ using
Hadamard powers.

\begin{proposition} \label{prop:Hadamard}
The RBM variety and model factor as Hadamard powers:
$$ V^k_n \,=\, (V^1_n)^{[k]} \quad \hbox{and} \quad M^k_n \,=\, (M^1_n)^{[k]}.$$
\end{proposition}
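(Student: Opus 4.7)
The plan is to read the factorization of the parameterization (\ref{eq:distribution2}) as a componentwise product of $k$ copies of the $k=1$ parameterization, and then to argue that this parameterwise identification is exactly what Hadamard powers encode.

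First I would look at the $v$th coordinate of the parameterization of $V^k_n$ given in (\ref{eq:distribution2}). Ignoring the scalar $1/Z$, which disappears in projective space, the coordinate is
\[
\Bigl(\prod_{j=1}^n \beta_j^{v_j}\Bigr) \cdot \prod_{i=1}^k \bigl(1 + \gamma_i\,\omega_{i1}^{v_1}\cdots\omega_{in}^{v_n}\bigr).
\]
The key observation is that, because each $v_j \in \{0,1\}$, we may factor each $\beta_j$ arbitrarily across the $k$ hidden nodes by writing $\beta_j = \beta_{1,j}\beta_{2,j}\cdots\beta_{k,j}$, so that $\beta_j^{v_j} = \prod_{i=1}^k \beta_{i,j}^{v_j}$. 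Inserting this and regrouping, the displayed coordinate becomes
\[
\prod_{i=1}^k \Bigl[ \bigl(\prod_{j=1}^n \beta_{i,j}^{v_j}\bigr) \bigl(1 + \gamma_i\,\omega_{i1}^{v_1}\cdots\omega_{in}^{v_n}\bigr)\Bigr].
\]
Each factor in the outer product is precisely the $v$th coordinate of the $k=1$ parameterization (\ref{eq:distribution2}) applied to the parameters $(\beta_{i,\cdot}, \gamma_i, \omega_{i,\cdot})$. This exhibits the parameterization of $V^k_n$ as a componentwise product of $k$ parameterizations of $V^1_n$.

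From this I would deduce the inclusion $V^k_n \subseteq (V^1_n)^{[k]}$: every point in the image of the $k$-node parameterization is a Hadamard product of $k$ points coming from the $1$-node parameterization, and taking Zariski closures preserves the containment. For the reverse inclusion $(V^1_n)^{[k]} \subseteq V^k_n$, given any $k$ tuples of parameters $(\beta_{i,\cdot}, \gamma_i, \omega_{i,\cdot})$, the combined parameters $\beta_j := \prod_i \beta_{i,j}$ together with $\gamma_i,\omega_{i,j}$ lie in the $k$-node parameter space and produce exactly the claimed Hadamard product; taking closures again gives the inclusion. These two inclusions yield $V^k_n = (V^1_n)^{[k]}$.

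For the statement $M^k_n = (M^1_n)^{[k]}$ I would repeat the same parameter factorization, now restricted to positive real parameters, so that $\beta_j = \prod_i \beta_{i,j}$ is always achievable in $\R_{>0}$ (and conversely). After componentwise multiplication the resulting vectors must be rescaled to lie in $\Delta_{2^n-1}$, but this is precisely the definition of Hadamard powers of subsets of the open simplex given in the excerpt. The compatibility $\overline{M^{[k]}} = \overline{M}^{[k]}$ noted there also gives a second, purely formal route from the variety statement to the model statement. The only mild obstacle is to ensure the splitting $\beta_j = \prod_i \beta_{i,j}$ is valid as an \emph{equality} of parameter values rather than only up to projective scaling; this is handled by noting that the splitting can be taken trivially as $\beta_{1,j} = \beta_j$ and $\beta_{i,j} = 1$ for $i\ge 2$, which works equally well over $\C^\times$ and over $\R_{>0}$.
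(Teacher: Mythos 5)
Your proposal is correct and follows essentially the same route as the paper: the authors likewise observe that a coordinate vector of the form (\ref{eq:distribution2}) factors componentwise into $k$ vectors of the $k=1$ form (and conversely), establish $M^k_n = (M^1_n)^{[k]}$ from this, and obtain the variety statement by passing to Zariski closures. Your write-up merely makes the parameter-splitting $\beta_j = \prod_i \beta_{i,j}$ explicit and handles the variety and model statements in the opposite order, which is an inessential difference.
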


\begin{proof}
A strictly positive vector $p$ with coordinates $p(v)$ as in (\ref{eq:distribution2}) 
admits a componentwise factorization into similar vectors for $k=1$, and, conversely,
the componentwise product of $k$ probability distributions in $M^1_n$ becomes
a distribution in $M^k_n$ after division by the partition function.
Hence $\, M^k_n = (M^1_n)^{[k]}$ in $\Delta_{2^n-1}$. The equation  
$ V^k_n = (V^1_n)^{[k]} $ follows by passing to the Zariski closure
in $\mathbb{P}^{2^n-1}$.
\end{proof}

The emerging field of {\em tropical mathematics} is predicated on the idea that
$\, {\rm log}({\rm exp}(x) + {\rm exp}(y)) \,$ is approximately equal to
${\rm max}(x,y)$ when $x$ and $y$ are quantities of different scale. 
For a first introduction see \cite{TropMath},
and for further reading see \cite{CTV, Develin, tropicalDraisma, MY} and references therein.
The process of passing from ordinary arithmetic to the max-plus algebra
is known as {\em tropicalization}.
 The same approximation motivates the definition of the {\it softmax} function in the neural networks literature.   A statistical perspective is offered in work by 
 Pachter and the third author \cite{ASCB, TGSM}.
 
If $q(v)$ approximates $ {\rm log}(p(v))$ in the 
 sense of tropical mathematics,
 and if we disregard the global additive constant $- \log Z$,
 then  (\ref{eq:distribution}) 
translates into the formula
\begin{equation}
\label{qformula}
 q(v) \,\,\, = \,\,\, {\rm max} \bigl\{\,
h^{\top}Wv + b^{\top}v + c^{\top}h \, :\, h \in \{0,1\}^k \,\bigr\} .
\end{equation}
This expression is a piecewise-linear concave function
$ \R^{nk+n+k} \rightarrow \R$ on the space of  model parameters $(W,b,c)$.
As $v$ ranges over $\{0,1\}^n$, there are $2^n$ such concave functions,
and these form the coordinates of  a piecewise-linear map 
\begin{equation}
\Phi : \R^{nk+n+k} \rightarrow \TP^{2^n-1} .\label{eq:Phi}
\end{equation}
Here $\TP^{2^n-1}$ denotes the {\em tropical projective space}
$\R^{2^n}\!/\R(1,1,\ldots,1)$, as in \cite{CTV,
  tropicalDraisma}. %Therefore, we can eliminate the constant term
%$\log Z$ from the expression~\eqref{qformula} while preserving the map.
The image of the map $\Phi$ is denoted $TM^k_n$ and is called the {\em
  tropical RBM model}.  The map $\Phi$ is the \emph{tropicalization} of the
given parameterization of the RBM model. 
It is our objective to investigate its
geometric properties.

This situation fits precisely into the general scheme of parametric
maximum a posterior (MAP) inference introduced in \cite{TGSM} and
studied in more detail by Elizalde and Woods
\cite{InferenceFunctions}.  In Section~5 below, we discuss the
statistical relevance of the map $\Phi$ and we examine its geometric
properties. Of particular interest are the domains of linearity of
$\Phi$, and how these are mapped onto the cones of the model $TM^k_n$.

Finally, we define the {\em tropical RBM variety} $TV^k_n$ to be the
tropicalization of the RBM variety $V^k_n$.  As explained in \cite[\S
3.4]{ASCB} and \cite[\S 3]{TGSM}, the tropical variety $TV^k_n$ is the
intersection in $\TP^{2^n-1}$ of all the tropical hypersurfaces
$\mathcal{T}(f)$ where $f$ runs over \emph{all} polynomials that
vanish on $V^k_n$ (or on $M^k_n$).  By definition, $\mathcal{T}(f)$ is
the union of all codimension one cones in the normal fan of the Newton
polytope of $f$.  If the homogeneous prime ideal of the variety
$V^k_n$ were known then the tropical variety $TV^k_n$ could in theory
be computed using the algorithms in \cite{CTV} which are implemented
in the software {\tt Gfan} (\cite{gfan}).  However, this prime ideal
is not known in general.  In fact, even for small instances, its computation is
very hard and relies primarily on tropical geometry techniques such as
the ones developed in \cite{Cueto}.  For instance, the main result
 in \cite{Cueto} states that the RBM variety
$V^2_4 $ is a hypersurface of degree $110$ in $\PP^{15}$, and it
remains a challenge to determine a formula for the defining
irreducible polynomial of this hypersurface.
To appreciate this challenge, note that the number of
monomials in the relevant multidegree  equals
 $5\, 529\, 528\, 561\, 944$.

\smallskip

Here is a brief summary of the four geometric objects we have introduced:
 
\begin{itemize}
\item
The semialgebraic set $M^k_n \subset \Delta_{2^n-1}$
of probability distributions represented by the restricted Boltzmann machine.
We call $M^k_n$ the {\em RBM model}.
\item The Zariski closure $V^k_n$ of the RBM model $M^k_n$. This is an algebraic variety in the complex projective space $\mathbb{P}^{2^n-1}$.
We call $V^k_n$ the {\em RBM variety}.
\item The tropicalization $TV^k_n$ of the variety $V^k_n$. This is a tropical
variety in the tropical projective space $\mathbb{TP}^{2^n-1}$.
We call $TV^k_n$ the {\em tropical RBM variety}.
\item The image $TM^k_n$ of the tropicalized  parameterization $\Phi$.
This is the subset of $\mathbb{TP}^{2^n-1}$ consisting of all optimal
score value vectors in the MAP inference problem for the RBM.
We call $TM^k_n$ the {\em tropical RBM model}.
\end{itemize}

We have inclusions $M^k_n \subset V^k_n$ and $TM^k_n \subset TV^k_n$.
The latter inclusion is the content of the second statement in
\cite[Theorem 2]{TGSM}.  We shall see that both
inclusions are strict even for $k = 1$.  For example, $\,M^1_3 \,$ is
a proper subset of $\, V^1_3 \cap \Delta_{7} = \Delta_7\,$ since
points in this set must satisfy the inequality $\,\sigma_{12}
\sigma_{13} \sigma_{23} \geq 0\,$ as indicated in Theorem
\ref{thm:wishfulthinking} below.
Likewise, $TM^1_3$ is a proper subfan of $\TP^7 = TV^1_3$.
This subfan will be determined in our discussion of 
the secondary fan structure in Example \ref{ex52}.

The dimensions of our four geometric objects satisfy the
following chain of equations and inequalities:
\begin{equation}
\label{InEqChain}
 {\rm dim}(TM^k_n) \,\leq\
{\rm dim}(TV^k_n) \,=\,
{\rm dim}(V^k_n) \,=\,
{\rm dim}(M^k_n) \,\leq \,  \min\{nk + n + k, 2^n-1\}.
\end{equation}
Here, the tropical objects $TM^k_n$ and $TV^k_n$ are polyhedral fans,
and by their dimension we mean the dimension of any cone of maximal
dimension in the fan. When speaking of the dimension of $V^k_n$ we
mean the Krull dimension of the projective variety, and for the model
$M^k_n$ we mean its dimension as a semialgebraic set.

The leftmost inequality in (\ref{InEqChain}) holds because $TM^k_n
\subset TV^k_n$. The left equality holds by the
Bieri-Groves Theorem (cf.~\cite[Theorem 4.5]{tropicalDraisma}) which
ensures that every irreducible variety has the same dimension as its
tropicalization.  The second equality follows from standard real
algebraic geometry results because $M_n^k$ has a regular point and is
Zariski dense in $V_n^k$.  Finally, the rightmost inequality in
(\ref{InEqChain}) is seen by counting parameters in the definition
(\ref{eq:psi})--(\ref{eq:distribution}) of the RBM model $M_n^k$, and
by bounding its dimension by the dimension of the ambient
space $\Delta_{2^n-1}$.

\smallskip

We conjecture that both of the inequalities in
(\ref{InEqChain}) are actually equalities:

\begin{conjecture}
\label{conj:main2} 
 The tropical RBM model has the expected dimension, i.e.~$TM_n^k$  is a 
 polyhedral fan of dimension ${\rm min}\{nk+n+k,2^n-1\}$ in $\TP^{2^n-1}$.
   \end{conjecture}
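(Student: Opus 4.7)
The plan is to work directly with the piecewise-linear parameterization $\Phi$ from (\ref{eq:Phi}) and compute the rank of its Jacobian on a full-dimensional cone of its domain of linearity, since $\dim(TM_n^k)$ equals the maximum of these ranks. On such a cone, for each visible state $v \in \{0,1\}^n$ the optimal hidden state $h^{\ast}(v) = (\theta_1(v),\ldots,\theta_k(v)) \in \{0,1\}^k$ achieving the maximum in (\ref{qformula}) is a fixed function of $v$, and each coordinate $\theta_i$ is a linear threshold function determined by the sign of $c_i + \sum_j W_{ij} v_j$. Differentiating (\ref{qformula}) gives $\partial q(v)/\partial W_{ij} = \theta_i(v) v_j$, $\partial q(v)/\partial b_j = v_j$, and $\partial q(v)/\partial c_i = \theta_i(v)$. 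Hence the conjecture reduces to the following purely combinatorial statement: one can choose a realizable $k$-tuple of linear threshold functions $\theta_1,\ldots,\theta_k\colon \{0,1\}^n \to \{0,1\}$ so that the $nk+n+k+1$ vectors
\[ \mathbf{1},\ (v_j)_v,\ (\theta_i(v))_v,\ (\theta_i(v) v_j)_v \qquad (1 \le i \le k,\ 1 \le j \le n) \]
in $\R^{2^n}$ attain the maximum possible rank $\min\{nk+n+k+1,\,2^n\}$.

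First I would fix notation for this combinatorial problem and verify it inductively on $k$. The base case $k=0$ is immediate: $\{\mathbf{1},v_1,\ldots,v_n\}$ has rank $n+1$. For the inductive step, I would attempt to choose $\theta_{k+1}$ so that the new block $\{\theta_{k+1},\ \theta_{k+1} v_1,\ \ldots,\ \theta_{k+1} v_n\}$ adds $n+1$ new dimensions to the span already generated by the first $k$ blocks, up to the ambient saturation $2^n$. Equivalently, working modulo the previous span, one needs the restricted vectors $(\theta_{k+1}(v),\theta_{k+1}(v)v_1,\ldots,\theta_{k+1}(v)v_n)_v$ to be linearly independent in the appropriate quotient. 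Here I expect to leverage Proposition \ref{prop:Hadamard}, which tropicalizes to a Minkowski-type sum, together with the coding-theoretic ideas invoked in Theorem \ref{thm:looksreasonable}: the support pattern $\{v : \theta_{k+1}(v) = 1\}$ is a half-space of $\{0,1\}^n$ and must be chosen to be ``generic'' with respect to the half-spaces already selected.

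The hard part is the intermediate range $\,2^{n-\ceil{\log_2(n+1)}} < k < 2^{n-\floor{\log_2(n+1)}}\,$ that is not covered by Theorem \ref{thm:looksreasonable}. Here neither the small-$k$ regime (where linear threshold functions are abundant and one can cite existence results for good binary linear codes) nor the large-$k$ regime (where one has room to use near-trivial ``singleton'' threshold functions and the count is forced by the ambient dimension $2^n-1$) is available. I would attack this gap along two complementary lines. The first is a direct construction based on a recursive partition of $\{0,1\}^n$ by carefully chosen halfspaces, using the existence of covering codes of appropriate radius to guarantee that the resulting support patterns span what is needed modulo the earlier blocks. The second is a semicontinuity/deformation argument: the Jacobian rank is lower semicontinuous on the secondary fan, so if one can exhibit any one-parameter family of LTF choices connecting the two known extremal regimes through the intermediate range without dropping rank, the result follows. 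Either approach requires a quantitative handle on the linear independence of indicator vectors $(\theta_i(v))_v$ and their Hadamard products with the $v_j$'s, which is the true combinatorial core of the conjecture.
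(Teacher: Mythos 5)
The statement you are trying to prove is Conjecture~\ref{conj:main2}, which the paper itself leaves open; it only establishes the special cases recorded in Theorem~\ref{thm:looksreasonable} and Table~\ref{tab:knownspecialcases}. Your first step --- computing the differential of $\Phi$ on an open cone of linearity and reducing the conjecture to maximizing the rank of the collection of vectors $(v_j)_v$, $(\theta_i(v))_v$, $(\theta_i(v)v_j)_v$ over realizable $k$-tuples of linear threshold functions --- is exactly the paper's Lemma~\ref{lem:matrixForTropicalMorphism} and Theorem~\ref{thm:cornercuts}, with your sets $\{v:\theta_i(v)=1\}$ being the paper's slicings $C_i$ and your vector blocks being the columns of $A$ and $A_{C_i}$. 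Up to that point you and the paper agree. The paper then resolves the combinatorial rank problem only when the slicings can be taken to be \emph{disjoint} Hamming balls of radius one (a packing, giving $k\le A_2(n,3)$ and hence $k\le 2^{n-\ceil{\log_2(n+1)}}$ by the Varshamov bound), or when they cover all of $\{0,1\}^n$ (a covering code, giving $k\ge K_2(n,1)$). You correctly identify that the intermediate range is the true difficulty, but neither of your two proposed strategies closes it, so the proposal is not a proof.

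Concretely: your inductive step asserts that one can always choose $\theta_{k+1}$ so that its block contributes $n+1$ new dimensions modulo the span of the earlier blocks ``up to saturation,'' but that assertion is precisely the open combinatorial core --- once $k$ exceeds the packing bound, the supports $\{v:\theta_i(v)=1\}$ must overlap, the block structure that makes the rank computation transparent (the $(n+1)\times(n+1)$ diagonal blocks in the paper's proof of Corollary~\ref{cor:codingconn}) disappears, and no argument is given for why a full-rank increment still exists. Your second strategy, a semicontinuity/deformation argument, does not apply as stated: for each fixed $k$ the quantity to be bounded below is a maximum of matrix ranks over a \emph{discrete} set of choices of $k$ slicings, and lower semicontinuity of rank on the parameter space only tells you that the rank can drop on the boundaries of cones, not that a rank achieved for one value of $k$ (or one collection of slicings) propagates to another. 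There is no one-parameter family interpolating between the two known regimes whose generic member would settle an intermediate $k$. So the gap you flag is real, it is the same gap the paper leaves open, and the proposal supplies no mechanism for bridging it.
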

   
   In light of the inequalities (\ref{InEqChain}), Conjecture \ref{conj:main2}
   implies Conjecture \ref{conj:main}. In Section~4 we shall prove
   some special cases of these conjectures, including Theorem~\ref{thm:looksreasonable}.
   
\section{The First Secant Variety of the $n$-Cube}
\label{sec:first-secant-variety}

We saw in Proposition \ref{prop:Hadamard} that the RBM for $k \geq 2$
can be expressed as the Hadamard power of the RBM for
$k=1$. Therefore, it is crucial to understand the model with one
hidden node. In this section we fix $k=1$ and we present an analysis
of that case. In particular, we shall give a combinatorial description
of the fan $TM^1_n$ which shows that it has dimension $2n+1$, as
stated in Conjecture \ref{conj:main2}.

We begin with a reparameterization of our model that describes it as a
secant variety.  Let $\lambda$, $\delta_1,\ldots,\delta_n$,
$\epsilon_1, \ldots, \epsilon_n$ be real parameters which range over
the open interval $(0,1)$, and consider the polynomial map
$\,p:(0,1)^{2n+1} \rightarrow \Delta_{2^n-1}\,$ whose coordinates are
given by
\begin{equation}    \label{eq:distribution3}\,\,
 p(v) \,\, = \,\,\, \lambda \prod_{i=1}^n \delta_i^{1-v_i} (1-\delta_i)^{v_i}
\,+\, (1-\lambda ) \prod_{i=1}^n \epsilon_i^{1-v_i} (1-\epsilon_i)^{v_i} 
\quad \hbox{for} \,\, v \in \{0,1\}^n .  \end{equation}

\begin{proposition}\label{prop:secantDescriptionM1n}
The image of $p$ coincides with the RBM model $M^1_n$.
\end{proposition}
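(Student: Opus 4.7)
The plan is to exhibit an explicit change of parameters between the RBM parametrization (\ref{eq:distribution2}) with $k=1$ and the mixture parametrization (\ref{eq:distribution3}), showing the two images coincide as subsets of $\Delta_{2^n-1}$.

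First I would observe that for $k=1$ the sum in (\ref{eq:distribution}) has only two terms, $h=0$ and $h=1$, so
$$
Z \cdot p(v) \,=\, \prod_{j=1}^n \beta_j^{v_j} \,+\, \gamma \prod_{j=1}^n (\beta_j \omega_j)^{v_j}.
$$
Setting $A := \prod_{j=1}^n (1+\beta_j)$ and $B := \gamma \prod_{j=1}^n (1+\beta_j\omega_j)$, one checks that $Z = A + B$, and dividing each term by the appropriate normalizer rewrites $p(v)$ as a convex combination
$$
p(v) \,=\, \frac{A}{A+B}\cdot \prod_{j=1}^n \frac{\beta_j^{v_j}}{1+\beta_j} \;+\; \frac{B}{A+B}\cdot \prod_{j=1}^n \frac{(\beta_j\omega_j)^{v_j}}{1+\beta_j\omega_j}.
$$
Matching this with (\ref{eq:distribution3}) suggests the substitution
$$
\lambda \,=\, \frac{A}{A+B}, \qquad \delta_j \,=\, \frac{1}{1+\beta_j}, \qquad \epsilon_j \,=\, \frac{1}{1+\beta_j\omega_j}.
$$
These formulas send $\R_{>0}^{2n+1}$ into $(0,1)^{2n+1}$, so the RBM model $M^1_n$ is contained in the image of $p$.

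For the reverse inclusion, given $(\lambda,\delta,\epsilon)\in (0,1)^{2n+1}$ I would invert the above substitution explicitly:
$$
\beta_j \,=\, \frac{1-\delta_j}{\delta_j}, \qquad \omega_j \,=\, \frac{\delta_j(1-\epsilon_j)}{\epsilon_j(1-\delta_j)}, \qquad \gamma \,=\, \frac{1-\lambda}{\lambda}\prod_{j=1}^n \frac{\epsilon_j}{\delta_j}.
$$
All three expressions lie in $\R_{>0}$, and plugging them back into (\ref{eq:distribution2}) with $k=1$ recovers the given distribution in (\ref{eq:distribution3}). Hence the image of $p$ is contained in $M^1_n$, and equality follows.

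I do not expect a serious obstacle: this is fundamentally the observation that the $k=1$ RBM is just the naive Bayes model with one binary hidden variable, and hence a two-component mixture of product Bernoulli distributions. The only delicate point is verifying that the parameter change is a bijection between $\R_{>0}^{2n+1}$ and $(0,1)^{2n+1}$, which is immediate from the explicit formulas above; no semialgebraic inequalities are lost because both parameter spaces are open orthants/cubes of the same dimension and the map between them is a diffeomorphism.
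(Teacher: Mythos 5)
Your proof is correct and follows essentially the same route as the paper: the same substitution $\beta_j=(1-\delta_j)/\delta_j$, $\omega_j=\frac{\delta_j}{1-\delta_j}\frac{1-\epsilon_j}{\epsilon_j}$, $\gamma=\frac{1-\lambda}{\lambda}\prod_j\frac{\epsilon_j}{\delta_j}$, checked to be a bijection between $\R_{>0}^{2n+1}$ and $(0,1)^{2n+1}$. You merely spell out the normalization step (identifying $Z=A+B$ and rewriting $p(v)$ as a convex combination) that the paper leaves implicit.
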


\begin{proof} Recall the parameterization~\eqref{eq:distribution2} of the RBM model $M^1_n$ from Section~2:
\begin{equation}    \label{eq:distribution4} \,\,
    p(v) \,\,\ = \,\,\,\,
  \frac{1}{Z} \, 
   \beta_1^{v_1}\beta_2^{v_2} \cdots \beta_n^{v_n}
   \bigl(1  + \gamma \,
   \omega_{1}^{v_1}
    \omega_{2}^{v_2}\cdots
    \omega_{n}^{v_n} \bigr) 
    \quad \, \hbox{for} \quad v \in \{0,1\}^n.
    \end{equation}
    We define a bijection between the parameter spaces
$\R_{>0}^{2n+1}$  and $(0,1)^{2n+1}$ as follows:
$$ \beta_i = \frac{1-\delta_i}{\delta_i}
\quad \hbox{and} \quad
\omega_i = \frac{\delta_i}{1-\delta_i}\frac{1-\epsilon_i}{\epsilon_i} \quad
\hbox{for} \,\,\, i = 1,2,\ldots,n , $$
$$ \gamma \,\, = \,\, Z(1-\lambda)
\epsilon_1 \epsilon_2 \cdots \epsilon_n
\quad \hbox{where} \quad
Z = (\lambda \delta_1 \delta_2 \cdots \delta_n)^{-1} . $$
This substitution is invertible and it transforms (\ref{eq:distribution4}) into  (\ref{eq:distribution3}).
\end{proof}

Proposition~\ref{prop:secantDescriptionM1n} shows that $M^1_n$ is the
first mixture of the independence model for $n$ binary random
variables. In phylogenetics, it coincides with the {\em general Markov
  model on the star tree} with $n$ leaves. A semi-algebraic
characterization of that model follows as a special case from recent
results of Zwiernik and Smith~\cite{Piotr}.  We shall present and
discuss their characterization in Theorem \ref{thm:wishfulthinking}
below.

First, however, we remark that the Zariski closure of a mixture
of an independence model is a secant variety of the corresponding
Segre variety.  This fact is well-known (see e.g. \cite[\S
4.1]{Oberwolfach}) and is here easily seen from
(\ref{eq:distribution3}). We conclude:

\begin{corollary}
The first RBM variety 
$V^1_n$ coincides with the first secant variety of the Segre embedding of 
the product of projective lines $(\PP^1)^n$ into $\PP^{2^n-1}$,
and the first tropical RBM variety $TV^1_n$ is the tropicalization of that secant variety.
\end{corollary}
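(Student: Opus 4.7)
The plan is to read the parametrization~\eqref{eq:distribution3} directly as a secant parametrization. Recall that the Segre embedding $\iota : (\PP^1)^n \hookrightarrow \PP^{2^n-1}$ sends the tuple $\bigl((x_1{:}y_1),\ldots,(x_n{:}y_n)\bigr)$ to the point whose $v$-th homogeneous coordinate equals $\prod_{i=1}^n x_i^{1-v_i}y_i^{v_i}$. Under the substitution $(x_i,y_i)=(\delta_i,\,1-\delta_i)$ (resp.\ $(\epsilon_i,\,1-\epsilon_i)$), the two product terms appearing in \eqref{eq:distribution3} are \emph{exactly} the images of two points of $(\PP^1)^n$ under $\iota$. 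Hence \eqref{eq:distribution3} exhibits $p$ as the convex combination $\lambda\,\iota(\delta)+(1-\lambda)\,\iota(\epsilon)$, a point on the secant line through $\iota(\delta)$ and $\iota(\epsilon)$. Combined with Proposition~\ref{prop:secantDescriptionM1n}, this shows $M^1_n\subseteq \sigma_1\bigl(\iota((\PP^1)^n)\bigr)$.

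For the reverse inclusion after taking Zariski closure, I would argue as follows. The first secant variety $\sigma_1(\iota((\PP^1)^n))$ is irreducible, since it is the closure of the image of the dominant rational map from the irreducible join $(\PP^1)^n\times(\PP^1)^n\times\PP^1$ into $\PP^{2^n-1}$. On the other hand, the map~\eqref{eq:distribution3}, viewed as a rational map $(\PP^1)^n\times(\PP^1)^n\times\PP^1\dashrightarrow\PP^{2^n-1}$ after homogenizing the pairs $(\delta_i,1-\delta_i)$ to $(x_i{:}y_i)$, $(\epsilon_i,1-\epsilon_i)$ to $(x_i'{:}y_i')$, and $(\lambda,1-\lambda)$ to $(s{:}t)$, is literally the standard secant parametrization of $\iota((\PP^1)^n)$. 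The real open set $(0,1)^{2n+1}$ is Zariski dense in $\C^{2n+1}$, so its image is Zariski dense in the full image of the secant parametrization. Since $V^1_n$ is the Zariski closure of $M^1_n$, we conclude $V^1_n=\sigma_1\bigl(\iota((\PP^1)^n)\bigr)$.

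The tropical statement is then immediate from the definitions recalled in Section~2: the tropical RBM variety $TV^1_n$ was defined as the tropicalization of the homogeneous prime ideal of $V^1_n$, and we have just identified this ideal with the ideal of the first secant variety of the Segre embedding $(\PP^1)^n\hookrightarrow \PP^{2^n-1}$.

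The only real obstacle is a clean justification of dominance in the second paragraph: one must make sure that the specific real parametrization in~\eqref{eq:distribution3}, with parameters constrained to $(0,1)$, does not accidentally sweep out a proper subvariety of the complex secant variety. The cleanest fix is precisely the projective reinterpretation above: each affine chart $\delta_i\mapsto(\delta_i{:}1-\delta_i)$ of $\PP^1$ is Zariski open, and the real interval $(0,1)$ is Zariski dense in $\PP^1(\C)$; therefore composing with the standard secant map and taking Zariski closure recovers the entire secant variety with no loss.
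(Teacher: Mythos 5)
Your proposal is correct and follows essentially the same route as the paper: the authors simply remark that the Zariski closure of a mixture of an independence model is the first secant variety of the corresponding Segre variety, citing this as well-known and ``easily seen'' from the parametrization~\eqref{eq:distribution3}, which is exactly the identification you spell out (two Segre points joined by a secant line, with Zariski density of the real parameter box giving the full complex secant variety). Your version just makes the dominance argument explicit; no substantive difference.
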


We next describe the equations defining the
first secant variety $V^1_n$. The coordinate functions $p(v)$ 
are the entries of  an $n$-dimensional table of format
$2 {\times} 2 {\times} \cdots {\times} 2 $. For each set partition
$\{1,2,\ldots,n\} = A \cup B$ we can write this table as
an ordinary two-dimensional matrix of format
$2^{|A|} {\times} 2^{|B|}$, with rows indexed by
$\{0,1\}^A$ and columns indexed by $\{0,1\}^B$.
These matrices are the {\em flattenings} of the
$2 {\times} 2 {\times} \cdots {\times} 2 $-table.
Pachter and Sturmfels \cite[Conjecture 13]{TGSM} conjectured 
that the homogeneous prime ideal of the projective variety 
$V^1_n \subset \PP^{2^n-1}$ is
generated by the $3 \times 3$-minors of all the
flattenings of the table $(p(v)))_{v \in \{0,1\}^n}$.
This conjecture has been verified computationally
for $n \leq 5$. A more general form of this conjecture was 
stated in \cite[\S 7]{GSS}. The set-theoretic version of that
general conjecture was proved by  Landsberg and Manivel in
\cite[Theorem 5.1]{Landsberg}. Their results imply:

\begin{theorem}[Landsberg-Manivel]
\label{thm:flat}
The projective variety 
$V^1_n \subset \PP^{2^n-1}$ is the common zero set of
the $3 \times 3$-minors of all the
flattenings of the table $(p(v)))_{v \in \{0,1\}^n}$.
\end{theorem}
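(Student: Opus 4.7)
The theorem is an equality of two sets in $\PP^{2^n-1}$, and I would prove the two inclusions separately. For the easy direction, every $T \in V^1_n$ is a limit of sums of two rank-one tensors $x_1 \otimes \cdots \otimes x_n + y_1 \otimes \cdots \otimes y_n$, which under any flattening $A \sqcup B$ becomes a sum of two rank-one matrices and hence has matrix rank at most $2$. All $3 \times 3$ minors therefore vanish on $V^1_n$, and the condition is preserved under Zariski closure.

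For the converse inclusion, let $Z$ denote the common zero set of the $3 \times 3$ flattening minors; I would show $Z \subseteq V^1_n$ by induction on $n$. The base cases $n \leq 3$ are trivial: in those ranges $V^1_n$ fills the ambient $\PP^{2^n-1}$, and every flattening has format at most $2 \times 2^{n-1}$, so the minor conditions are vacuous. For the inductive step $n \geq 4$, pick $T \in Z$ and slice along the first tensor factor, writing $T = e_0 \otimes T_0 + e_1 \otimes T_1$ with $T_0, T_1 \in (\C^2)^{\otimes(n-1)}$. Any flattening of $T_j$ appears as a $2^{|A|} \times 2^{|B|-1}$ submatrix of a flattening of $T$, so its $3 \times 3$ minors also vanish; by the inductive hypothesis, both $T_0$ and $T_1$ lie in $V^1_{n-1}$.

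The main obstacle is the combining step: membership of the two slices in $V^1_{n-1}$ only bounds the border rank of $T$ by $4$, not by $2$. To close the gap I would use the remaining flattening constraints that involve the first factor. First, the $\{1\} \sqcup \{2,\ldots,n\}$ flattening forces $T_0$ and $T_1$ to span a subspace of dimension at most $2$; if they are linearly dependent, $T$ factors as a pure vector in the first slot tensored with an element of $V^1_{n-1}$ and we are done. If instead $T_0$ and $T_1$ are independent, I would exploit the flattenings of the form $\{1,j\} \sqcup (\{2,\ldots,n\} \setminus \{j\})$ for $j \geq 2$: their rank-$2$ condition couples the $j$-slicings of $T_0$ and $T_1$, and a careful case analysis of these compatibility relations should force $T_0$ and $T_1$ into a common secant plane spanned by two shared rank-one tensors of $(\PP^1)^{n-1}$, yielding the desired rank-$2$ decomposition of $T$ (possibly only after a degeneration, reflecting that $V^1_n$ is a Zariski closure).

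An alternative and more economical route is to invoke the set-theoretic resolution of the general conjecture of \cite{GSS} due to Landsberg and Manivel, namely \cite[Theorem 5.1]{Landsberg}, which establishes exactly this kind of flattening description for arbitrary Segre products and specializes immediately to the binary case $(\PP^1)^n$. The phrasing ``Their results imply'' in the excerpt indicates that this citation is the intended argument, and the inductive sketch above serves mainly as a check that the general result is credible in our setting.
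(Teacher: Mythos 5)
Your final route is exactly the paper's: the paper gives no independent proof of this theorem, deducing it directly from the set-theoretic resolution of the general GSS conjecture by Landsberg and Manivel \cite[Theorem 5.1]{Landsberg} (the easy inclusion, that flattening minors vanish on the secant variety, being immediate). Your inductive sketch is only supplementary and, as you yourself note, does not close the combining step for two independent slices $T_0,T_1 \in V^1_{n-1}$ --- that step is precisely where the real content of Landsberg--Manivel lies --- so the citation carries the full weight of the argument, just as it does in the paper.
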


We now come to the inequalities that determine
$M^1_n$ among the real points of $V^1_n$.
For any pair of indices $i,j \in \{1,2,\ldots,n\}$
we write $\sigma_{ij}$ for the covariance
of the two random variables $X_i$ and $X_j$ obtained
by marginalizing the distribution, and we write
$\,\Sigma = (\sigma_{ij})\,$ for the $n {\times} n$-covariance matrix.
We regard $\Sigma$ as a polynomial map from 
 the simplex $\Delta_{2^n-1}$ to the space $\R^{\binom{n+1}{2}}$
 of symmetric $n {\times} n$-matrices.
The off-diagonal entries of the covariance matrix $\Sigma$ are the 
$2 {\times} 2$-minors obtained by marginalization from 
the table $(p(v))$. For example, for $n=4$ the covariances are
\begin{eqnarray*} & \sigma_{12} \, = \,
{\rm det} \begin{pmatrix}
p_{0000}{+}p_{0001}{+}p_{0010}{+}p_{0011} &
p_{0100}{+}p_{0101}{+}p_{0110}{+}p_{0111} \\
p_{1000}{+}p_{1001}{+}p_{1010}{+}p_{1011} &
p_{1100}{+}p_{1101}{+}p_{1110}{+}p_{1111} 
\end{pmatrix} ,& \\
&
\sigma_{13} \, =  \,
{\rm det} \begin{pmatrix}
p_{0000}{+}p_{0001}{+}p_{0100}{+}p_{0101} &
p_{0010}{+}p_{0011}{+}p_{0110}{+}p_{0111} \\
p_{1000}{+}p_{1001}{+}p_{1100}{+}p_{1101} &
p_{1010}{+}p_{1011}{+}p_{1110}{+}p_{1111} 
\end{pmatrix}, &   %\ldots 
 \ \ {\rm etc}.
\end{eqnarray*}

Zwiernik and Smith \cite{Piotr} gave a semi-algebraic characterization
of the general Markov model on a trivalent phylogenetic tree in terms
of covariances and moments. The statement of their characterization is
somewhat complicated, so we only state a weaker necessary condition
rather than the full characterization.  Specifically, applying
\cite[Theorem 7]{Piotr} to the star tree on $n$ leaves implies the
following result.

\begin{corollary} \label{thm:wishfulthinking}
If a probability distribution $p \in \Delta_{2^n-1}$ lies in the
first RBM model $M^1_n$ then
all its matrix flattenings 
(as in Theorem \ref{thm:flat}) have rank $\leq 2$ and
$$ \sigma_{ij} \sigma_{ik} \sigma_{jk} \,\, \geq \,\, 0 \qquad
\hbox{for all distinct triples} \,\,\, i,j,k \in \{1,2,\ldots,n\}.$$
\end{corollary}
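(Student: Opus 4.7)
The plan is to handle the two conclusions separately and directly from the mixture parametrization (\ref{eq:distribution3}).

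First, the rank condition is essentially free. By Proposition \ref{prop:secantDescriptionM1n}, any $p \in M^1_n$ lies in the RBM variety $V^1_n$, and by Theorem \ref{thm:flat} this variety is cut out set-theoretically by the $3 \times 3$-minors of all flattenings. Since vanishing of every $3 \times 3$-minor of a matrix is equivalent to rank at most $2$, every flattening of the table $(p(v))$ has rank $\leq 2$, as asserted.

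For the sign inequality, I would compute the pairwise covariance $\sigma_{ij}$ explicitly from the mixture form (\ref{eq:distribution3}). Marginalization to any subset of coordinates preserves the two-component structure because each component is a product distribution; in particular the marginal of $(X_i,X_j)$ is a mixture of two independent $\{0,1\}^2$-distributions. Writing $\mu_r = 1-\delta_r$ and $\nu_r = 1-\epsilon_r$ for the component means of $X_r$, one has $E[X_r] = \lambda \mu_r + (1-\lambda)\nu_r$ and $E[X_i X_j] = \lambda\mu_i\mu_j + (1-\lambda)\nu_i\nu_j$. A short expansion then yields the factorization
$$\sigma_{ij} \,=\, \lambda(1-\lambda)(\mu_i-\nu_i)(\mu_j-\nu_j),$$
which is the heart of the argument.

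Multiplying the three covariances indexed by $\{i,j\}$, $\{i,k\}$, $\{j,k\}$ gives
$$\sigma_{ij}\sigma_{ik}\sigma_{jk} \,=\, \lambda^3(1-\lambda)^3(\mu_i-\nu_i)^2(\mu_j-\nu_j)^2(\mu_k-\nu_k)^2 \,\geq\, 0,$$
since $\lambda \in (0,1)$ and the remaining factors are perfect squares. There is no substantial obstacle: the argument reduces to a one-line variance computation for binary mixtures together with Theorem \ref{thm:flat}. The only mild care needed is to verify that independence of coordinates within each mixture component is inherited by the bivariate marginals, so that the conditional covariance within each component genuinely vanishes. Note that this strategy recovers exactly the two stated necessary conditions but not the full semi-algebraic characterization of \cite{Piotr}, which additionally constrains higher moments.
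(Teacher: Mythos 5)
Your proof is correct and takes essentially the same route as the paper: the rank bound is read off from Theorem \ref{thm:flat}, and the paper itself notes that the sign condition ``follows easily from the parameterization,'' exhibiting $\sigma_{ij}$ as $\lambda(1-\lambda)(\delta_i-\epsilon_i)(\delta_j-\epsilon_j)$ times a manifestly positive factor --- which is exactly your factorization, since $\mu_r-\nu_r=\epsilon_r-\delta_r$. (The paper's formal derivation cites Zwiernik--Smith specialized to the star tree, but the direct covariance computation you give is the argument it actually spells out.)
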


These inequalities follow easily from the parameterization
(\ref{eq:distribution4}), which yields
$$ \sigma_{ij} \,\, = \,\, \lambda (1-\lambda) (\delta_i - \epsilon_i) (\delta_j - \epsilon_j) \,\frac{\delta_i\delta_j}{\prod_{s=1}^n\delta_s}\,\frac{\epsilon_i\epsilon_j}{\prod_{s=1}^n\epsilon_s}.$$
This factorization also shows that the binomial relations $\sigma_{ij}
\sigma_{kl} = \sigma_{il} \sigma_{jk}$ hold on $M^1_n$. These same
binomial relations are valid for the covariances in factor analysis
\cite[Theorem 16]{FactorAnalysis}, thus further underlining the
analogies between the Gaussian case and the binary case.  Theorem 20
in \cite{Piotr} extends the covariance equations $\sigma_{ij}
\sigma_{kl} = \sigma_{il} \sigma_{jk}$ to a collection of quadratic
binomial equations in all tree-cumulants, which in turn can be
expressed in terms of higher order correlations.  For the star tree,
these equations are equivalent on $\Delta_{2^n-1}$ to the rank $\leq
2$ constraints.  However, for general tree models, the binomial
equations in the tree-cumulants are necessary conditions for
distributions to lie in these models.

We now turn to the tropical versions of the RBM model for $k=1$.
The variety $V^1_n$ is cut out 
by the $ 3 {\times} 3$-minors of all flattenings of the
table $\bigl(p(v) \bigr)_{v \in \{0,1\}^n}$. It is known that the
$3 {\times} 3$-minors of {\bf one} fixed two-dimensional matrix form
a tropical basis (cf.~\cite[\S 2]{CTV}).  Indeed, that statement is equivalent to
\cite[Theorem 6.5]{Santos}. It is natural to ask whether the
tropical basis property continues to hold for the set of {\bf all} $3 {\times} 3$-determinants
in Theorem \ref{thm:flat}. Since each  flattening of our table
corresponds to a non-trivial edge split of a tree on $n$ taxa (i.e. a
partition of the set of taxa into two sets each of cardinality $\geq 2$), our question can be reformulated as follows:

\begin{question}\label{conj:intersectionOfEdgeSplits}
Is  the tropical RBM variety $TV^1_n$ equal to the intersection of the tropical 
  rank $2$~varieties
  associated to non-trivial edge splits on a collection of trees on $n$ taxa?
\end{question}

The tropical rank two varieties associated to each of the edge splits
have been studied recently by Markwig and Yu \cite{MY}. They endow
this determinantal variety with a simplicial fan structure that has
the virtue of being shellable. The cones of this simplicial fan
correspond to weighted bicolored trees on $2^{n-1}$ taxa with no
monochromatic cherries. The points in a cone can be viewed as a
matrix encoding the distances between leaves with different
colors in the weighted bicolored tree.

Question~\ref{conj:intersectionOfEdgeSplits} is void for $n \leq 3$,
so the first relevant case concerns $n=4$ taxa.
We were surprised to learn that the answer is negative
already in this case:

\begin{example}
\label{ex:2222}
The prime ideal of the variety $V^1_4$ is generated by 
the $3 \times 3$-minors of the three flattenings of
the $2 {\times} 2 {\times} 2 {\times} 2$-table $p$.
As a statistical model,
each one of the three flattenings corresponds to the graphical model
associated to each one of the quartet trees $(12|34)$, $(13|24)$ and
$(14|23)$, as depicted in Figure~\ref{fig:quartets}.
\begin{figure}[ht]
  \centering
  \subfloat[$(12|34)$]{\includegraphics[scale=0.17]{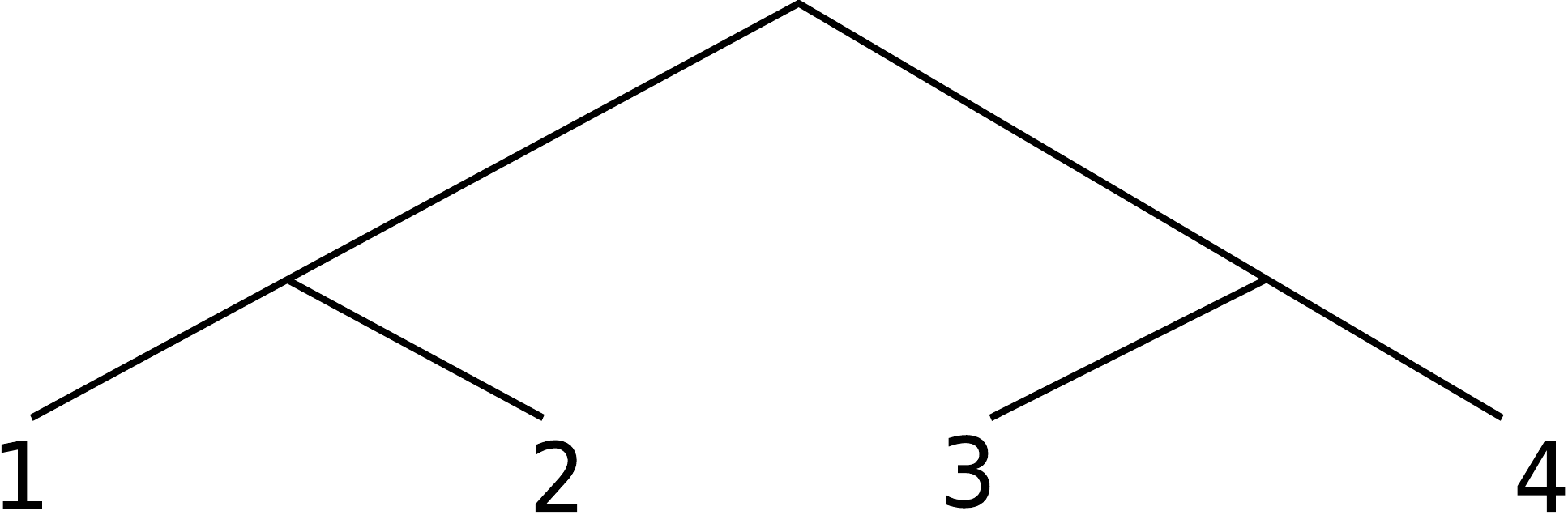}\label{subfig:12|34}}\qquad
\subfloat[$(13|24)$]{\includegraphics[scale=0.17]{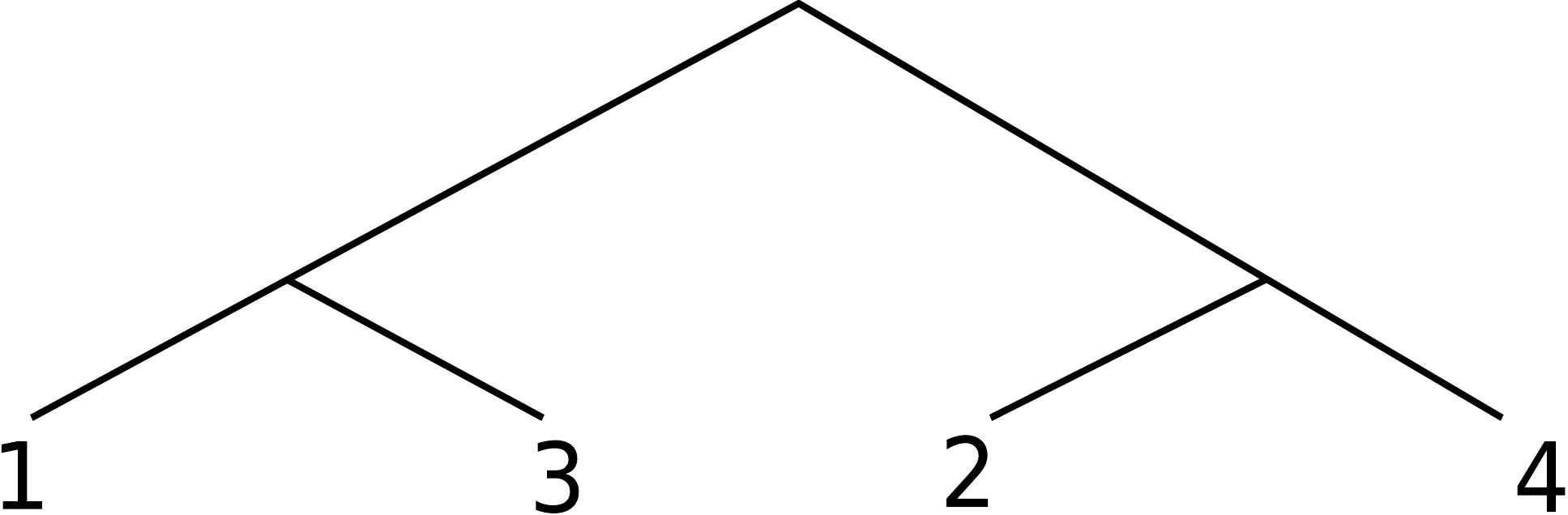}\label{subfig:12|34b}}\qquad
\subfloat[$(14|23)$]{\includegraphics[scale=0.17]{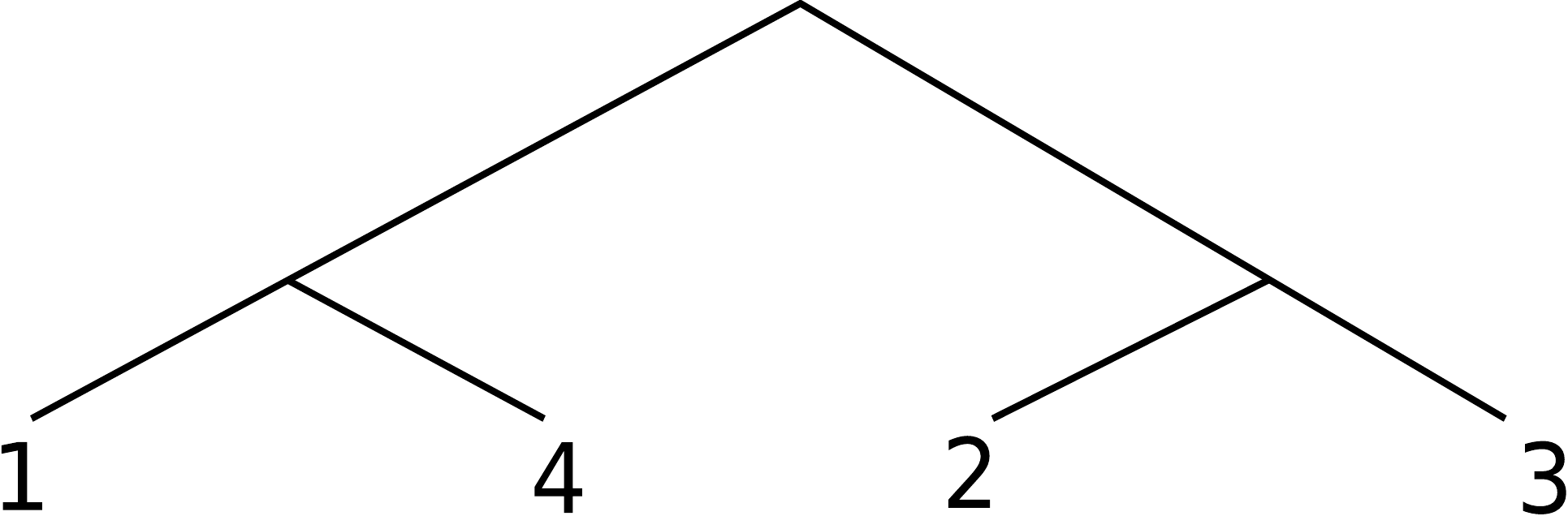}}
%\vspace{-3cm}
  \caption{Quartet trees associated to the flattenings for $n=4$.}
  \label{fig:quartets}
\end{figure}

Algebraically, each flattening corresponds to the variety cut out by the $3 \times 3$-minors
of a $4 \times 4$-matrix of unknowns. These minors form a tropical basis.
The tropical variety they define is a pure fan of dimension $11$ in
$\TP^{15}$ with a 6-dimensional lineality space. The
simplicial fan structure on this variety given by \cite{MY} has
the $f$-vector $\bigl(98, 1152, 4248, 6072, 2952\bigr)$. 
Combinatorially, this object is a shellable
$4$-dimensional simplicial complex which is the bouquet of
$73$ spheres. However,
this determinantal variety admits a different fan structure, induced from
the Gr\"obner fan as in \cite{CTV}, or from the 
fact that the sixteen $3 \times 3$-minors form a tropical basis.
 Its $f$-vector is $\bigl(50, 360, 1128, 1680, 936 \bigr)$.

The tropical variety $TV^1_4$ is a pure fan of dimension
$9$ in $\TP^{15}$. Its lineality space has dimension $4$, and the
cones of various dimensions are tallied in its $f$-vector
$$f(TV^1_4) \quad = \quad \bigl(382, 3436, 11236, 15640, 7680\bigr). $$
Question~\ref{conj:intersectionOfEdgeSplits} asks whether the
$9$-dimensional tropical variety $TV^1_4$ is the intersection of
the three $11$-dimensional tropical determinantal varieties
associated with the three trees in Figure \ref{fig:quartets}.
The answer is ``no''. Using the software {\tt Gfan} \cite{gfan},
we computed the tropical prevariety cut out by the
union of all forty-eight $3 {\times} 3$-minors.
The output is a \emph{non-pure} polyhedral fan of dimension $10$ with a
$4$-dimensional lineality space (the same one as of $TV^1_4$), having
$f$-vector $( 298, 2732, 9440, 13992, 7304, 96)$.
The tropical variety $TV^1_4$ is a triangulation of
a proper subfan, and each of the $96$ 10-dimensional maximal cones
lies in the prevariety but not in the variety.
An example of a vector in the relative interior of a maximal cone is
$$ q \,\, = \,\,  (59, 1, 80, 86, 102, 108, 107, 113, 109, 115, 100, 106, 78, 84, 21, 43) .$$
(Here, coordinates are indexed in lexicographic order $p_{0000},
p_{0001}, \ldots, p_{1111}$).  Given the weights $q$, the initial form
of each $3 {\times} 3$-minor of each flattening is a binomial,
however, the initial form of the following polynomial in the ideal of
$V^1_4$ is the underlined monomial:
$$ \begin{matrix}
     \underline{p_{0000} p_{0110} p_{1010} p_{1101}}
-    p_{0010} p_{0100} p_{1000} p_{1111}
+    p_{0010} p_{0100} p_{1001} p_{1110} \\
-    p_{0000} p_{0110} p_{1001} p_{1110} 
-    p_{0001} p_{0110} p_{1010} p_{1100}
+    p_{0000} p_{0010} p_{1100} p_{1111} \\
-    p_{0000} p_{0010} p_{1101} p_{1110}
+    p_{0001} p_{0110} p_{1000} p_{1110}.
\end{matrix}
$$
Anders Jensen performed another computation, using
{\tt Gfan} and {\tt SoPlex}~\cite{SoPlex}, which verified that we
get a tropical basis by augmenting the $3 {\times} 3$-minors
with the above quartic and its images under the symmetry group of
the $4$-cube. This is a non-trivial
computation because the corresponding fan structure on 
$TV^1_4$ has the $f$-vector
$$  (37442 ,321596 ,843312 ,880488 ,321552). $$
Using the language of \cite{Santos}, we may conclude from our
computational results
that the notions of tropical rank and Kapranov rank disagree
for $2 {\times} 2 {\times} 2 {\times} 2$-tensors.
\qed
\end{example}

\smallskip

Last but not least, we examine the tropical model $TM^1_n $.
This is a proper subfan of  the tropical variety $TV^1_n$,
 namely, $TM^1_n$ is the image of the tropical morphism  $\Phi :
\R^{2n+1} \rightarrow \TP^{2^n-1}$ which is the specialization 
of (\ref{eq:Phi}) for $k=1$. Equivalently, $\Phi$ is
the tropicalization of the map (\ref{eq:distribution4}), and its
coordinates are written explicitly as 
\begin{equation}
\label{qformula2}
 q(v) \,\,\, = \,\,\, b^{\top} v \,\,+\,\,
 {\rm max} \bigl\{\,
0  \,,\,   \omega v + c \bigr\}. 
 \end{equation}
 This concave function is the maximum of two linear functions.
The $2n+1$ parameters are given by a column vector $b \in \R^n$, a row vector $\omega \in \R^n$,
and a scalar $c \in \R$. A different -- but entirely equivalent -- tropicalization can be
derived from (\ref{eq:distribution3}).
As $v$ ranges over $\{0,1\}^n$, there are $2^n$ such concave functions,
and these form the coordinates of  the tropical morphism $\Phi$.
 We note that $\Phi$ made its first explicit appearance in 
   \cite[Equation (10)]{TGSM}, where it was
discussed in the context of {\em ancestral reconstruction}
in statistical phylogenetics. Subsequently, Develin \cite{Develin}
and Draisma \cite[\S 7.2]{tropicalDraisma} introduced
a tropical approach to secant varieties of toric varieties, and our model
fits well into the context developed by these two authors.

\begin{remark}
  The first tropical RBM model $TM^1_n$ is the image of the tropical
  secant map for the Segre variety $(\PP^1)^n$ in the sense of Develin
  \cite{Develin} and Draisma \cite{tropicalDraisma}. The linear space
  for their constructions has basis $\{ \sum_{\alpha \in
    \{0,1\}^n, \alpha_i=1} e_{\alpha} : i =1, \ldots, n\}$, and the
 underlying point configuration consists of the vertices of the $n$-cube.
\end{remark}

In light of Example \ref{ex:2222}, it makes
sense to say that the $2 {\times} \cdots {\times} 2$-tensors in
the tropical variety $TV^1_n$ are precisely those
that have {\em Kapranov (tensor) rank} $\leq 2$. This would be consistent with the
results and nomenclature in \cite{Develin, Santos}. A proper subset of the tensors
of Kapranov rank $\leq 2$ are those that have {\em Barvinok (tensor) rank} $\leq 2$. These
are precisely the points in the first tropical
RBM model $TM^1_n$. 

We close this section by showing that $TM^1_n$ 
has the expected dimension:

\begin{proposition} \label{prop:RBM1}
The dimension of the tropical RBM model $TM^1_n$ is $2n+1$.
\end{proposition}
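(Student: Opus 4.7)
The plan is to combine the upper bound implicit in (\ref{InEqChain}) with an explicit lower bound obtained from the Jacobian of $\Phi$ at a cleverly chosen parameter value. Chain (\ref{InEqChain}) already gives $\dim(TM^1_n) \leq \min\{2n+1,\, 2^n-1\}$, which equals $2n+1$ for $n \geq 3$; for $n \in \{1,2\}$ the ambient dimension $2^n-1$ is the binding constraint and the statement is either void or trivial. I therefore focus on $n \geq 3$ and aim to exhibit a point in parameter space where $\Phi$ is locally linear with differential of rank $2n+1$ modulo $\R\mathbf{1}$.

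First I would choose the parameter vector $\omega = (1,1,\ldots,1)$, $c = -\tfrac{3}{2}$, with $b \in \R^n$ arbitrary. For $v \in \{0,1\}^n$ we have $\omega \cdot v + c = |v| - \tfrac{3}{2}$, where $|v| := \sum_j v_j$, so in an open neighborhood of this point the sign of $\omega \cdot v + c$ is constant for each $v$: positive exactly when $v \in S := \{v \in \{0,1\}^n : |v| \geq 2\}$. On that neighborhood formula (\ref{qformula2}) becomes the genuine linear map
\[
q(v) \;=\; b^\top v \;+\; \mathbf{1}_S(v)\cdot (\omega \cdot v + c),
\]
whose $2n+1$ column gradients with respect to $(b_1,\ldots,b_n,\omega_1,\ldots,\omega_n,c)$ are the vectors $(v_i)_v$, $(v_j\,\mathbf{1}_S(v))_v$, and $(\mathbf{1}_S(v))_v$ in $\R^{2^n}$.

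The main step is then to verify that these $2n+1$ vectors together with the constant vector $\mathbf{1}$ are linearly independent, which forces the Jacobian to have rank $2n+2$ in $\R^{2^n}$ and hence rank $2n+1$ in $\TP^{2^n-1}$. Suppose
\[
\sum_i \alpha_i v_i + \sum_j \beta_j v_j \mathbf{1}_S(v) + \gamma\, \mathbf{1}_S(v) + \delta \;=\; 0 \qquad\text{for all } v \in \{0,1\}^n.
\]
Evaluating at $v \in \{0,1\}^n \setminus S = \{0,\,e_1,\ldots,e_n\}$ kills the $\mathbf{1}_S$-terms, so $v=0$ forces $\delta = 0$ and each $v=e_i$ forces $\alpha_i = 0$. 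Evaluating next at $v \in S$ leaves $\sum_j \beta_j v_j + \gamma = 0$ for every $v$ with $|v|\geq 2$. Since $n \geq 3$, both $\mathbf{1}$ and each $\mathbf{1}-e_i$ belong to $S$; subtracting the equation at $\mathbf{1}-e_i$ from the one at $\mathbf{1}$ gives $\beta_i = 0$ for every $i$, and then $\gamma = 0$. Because $\Phi$ is linear on an open neighborhood of the chosen parameter, its image contains a relatively open subset of dimension $2n+1$, completing the lower bound.

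The only real subtlety is selecting the linear region so that both $S$ and its complement affinely span $\R^n$; this is precisely the condition that dictates $n \geq 3$ and also explains why a more naive choice such as $S = \{v : v_n = 1\}$ fails, dropping rank by one through the relation $\alpha_n + \beta_n + \gamma = 0$. Once the spanning condition is ensured, the rank count is a short linear-algebra check rather than a genuine obstacle.
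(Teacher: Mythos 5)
Your proof is correct and follows essentially the same route as the paper's: both arguments exhibit a region of linearity of $\Phi$ given by a slicing of the $n$-cube in which each part affinely spans $\R^n$ (your choice $C^-=\{0,e_1,\ldots,e_n\}$ is exactly the Hamming-ball slicing the paper reuses for general $k$), and then verify that the resulting linear map has rank $2n+1$ modulo the lineality $\R\mathbf{1}$. Your version is merely more explicit about the rank computation and the quotient by $\R\mathbf{1}$, which the paper phrases abstractly as a direct-sum decomposition of the image.
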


\begin{proof}
Each region of linearity of the map $\Phi$
is defined by a partition $C$ of $\{0,1\}^n$ into two
disjoint subsets $C^-$ and $C^+$, according to the condition $\omega v+c<0$
or $\omega v+c>0$. Thus, the corresponding region 
is an open convex polyhedral cone, possibly empty,
in the parameter space $\R^{2n+1}$.  It consists of all triples $(b,\omega,c)$ such that
$\,\omega v + c <  0$ for $v \in C^-$ and $\,\omega v + c > 0$ for $v \in C^+$. 
 Assuming $n \geq 3$, we can choose a partition $C$ 
 of $\{0,1\}^n$ such that this  cone  is non-empty and
both $C^-$ and $C^+$ affinely span $\R^n$. The image of the cone
 under the map $\Phi$ spans a space isomorphic
to the direct sum of the images of $\,b \mapsto ( b^{\top}  v : v \in C ) \,$ and
$\,(\omega,c) \,\mapsto \,( \omega v + c :  v \in C^+ )$. Hence this image
has dimension $2n+1$, as expected.
\end{proof}

An illustration of the proof of Proposition \ref{prop:RBM1}
is given in Figure \ref{fig:slicingN=3}. The technique of
partitioning the vertices of the cube will be essential
in our dimension computations for general $k$  in
the next section. In Section~5 we return to
the small models $TM^1_n$ and take a closer look at their
geometric and statistical properties.

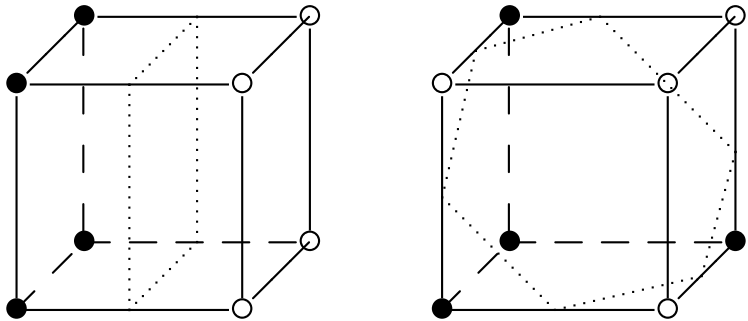
\begin{figure}[ht]
\begin{center}
\[
\scalebox{2}{\begin{xy}<15mm,0mm>:
(0,0) *{\bullet};
p+(0,1) *{\bullet} **@{-};
  p+(.3,.3) **@{-},
p+(1,0) *{\circ} **@{-};
  p+(.3,.3) **@{-},
p+(0,-1) *{\circ} **@{-};
  p+(.3,.3) **@{-},
p+(-1,0) **@{-};
(.3,.3) *{\bullet} **@{--};
p+(0,1) *{\bullet} **@{--};
p+(1,0) *{\circ} **@{-};
p+(0,-1) *{\circ} **@{-};
p+(-1,0) **@{--};
%slice
(.5,0);
p+(0,1) **@{.};
p+(.3,.3) **@{.};
p+(0,-1) **@{.};
p+(-.3,-.3) **@{.};
\end{xy}}
\qquad \qquad
\scalebox{2}{\begin{xy}<15mm,0mm>:
(0,0) *{\bullet};
p+(0,1) *{\circ} **@{-};
  p+(.3,.3) **@{-},
p+(1,0) *{\circ} **@{-};
  p+(.3,.3) **@{-},
p+(0,-1) *{\circ} **@{-};
  p+(.3,.3) **@{-},
p+(-1,0) **@{-};
(.3,.3) *{\bullet} **@{--};
p+(0,1) *{\bullet} **@{--};
p+(1,0) *{\circ} **@{-};
p+(0,-1) *{\bullet} **@{-};
p+(-1,0) **@{--};
%labels
%(.6,.8) *+{H^f};
%(.5,.5) *+{H^f_+};
%(.95,.9) *+{H^f_-};
%slice
(.15,1.15);
(.7,1.3)  **@{.};
(1.3,.7)  **@{.};
(1.15,.15)  **@{.};
(.5,0)  **@{.};
(0,.5)  **@{.};
(.15,1.15) **@{.};
\end{xy}}
\]
\end{center}
\caption{Partitions of $\{0,1\}^3$ that define non-empty cones on which $\Phi$ is linear.
Here $C^+$ and $C^-$ are indicated by black ($\bullet$) and white ($\circ$) vertices of the $3$-cube.
The slicing on the right represents a cone
in the parameter space whose image under $\Phi$ is full-dimensional, while the one  on the left
does not.} \label{fig:slicingN=3}
\end{figure}

\section{The Tropical Model and its Dimension}
\label{sec:tropical-model-its}

This section is concerned with Conjecture~\ref{conj:main2} which
states that the tropical RBM model has the expected dimension.
Namely, our aim is to show that
$$\,  {\rm dim}(TM^k_n) \,= \, kn+k+n\, \quad \hbox{ for } \quad k \,\leq\, \frac{2^n-1-n}{n+1}. $$
For $k=1$ this is Proposition \ref{prop:RBM1}, and we now consider the
general case $k \geq 2$.  Our main tool towards this goal is the 
dimension formula in Theorem \ref{thm:cornercuts} below. 
As in the previous section, we study the regions of linearity of the
tropical morphism $\Phi$. 

Let $A$ denote the matrix of format $2^n\times n$ whose rows
are the vectors in $\{0,1\}^n$.
A subset $C$ of the vertices of the $n$-cube is a {\em slicing} if
there exists a hyperplane that has the vertices in $C$ on the positive side
and the remaining vertices of the $n$-cube on the other side. 
In the notation in the proof of Proposition~\ref{prop:RBM1}, the subset
$C$ was denoted by $C^+$. Two examples of slicings
for $n=3$ are shown  in Figure~\ref{fig:slicingN=3}.

For any slicing $C$ of the $n$-cube, let $A_C$  be the $2^n \times (n{+}1)$-matrix
whose rows $v$ indexed by the vertices in $C$ are $(1,v)\in \{0,1\}^{n+1}$
and whose other rows are all identically zero.  The following result
extends the argument used for Proposition~\ref{prop:RBM1}.

\begin{lemma}\label{lem:matrixForTropicalMorphism}
On each region of linearity, the tropical morphism $\Phi$ in (\ref{eq:Phi}) coincides
with the linear map represented by a $2^n\times (nk+n+k)$-matrix of the
form
 $$ \mcA \,\,\, = \,\,\, \bigl(
 \, A \,\,|\,\, A_{C_1} \,\,|\,\, A_{C_2} \,\,|\,\, \cdots \,\,|\,\,
 A_{C_k} \,\bigr) , $$ 
 for some slicings $C_1, C_2, \ldots, C_k$ of
 the $n$-cube.
\end{lemma}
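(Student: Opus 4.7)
The plan is to exploit the fact that the hidden variables $h_1,\ldots,h_k$ are independent in the maximization defining $q(v)$. Writing $W_i$ for the $i$th row of $W$, the formula \eqref{qformula} becomes
$$q(v) \,=\, b^\top v \,+\, \max_{h\in\{0,1\}^k}\sum_{i=1}^k h_i(W_i v + c_i) \,=\, b^\top v \,+\, \sum_{i=1}^k \max\{0,\, W_i v + c_i\},$$
since each $h_i$ can be optimized independently. This is the same decomposition that gave formula \eqref{qformula2} in the case $k=1$, now applied row by row.

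Next I would identify the regions of linearity of $\Phi$ as sign patterns of the $k\cdot 2^n$ affine forms $W_i v + c_i$. A maximal open region where $\Phi$ is linear is cut out by fixing, for each $i\in\{1,\ldots,k\}$, a sign for $W_i v + c_i$ at every $v\in\{0,1\}^n$. For such a region to be nonempty in parameter space $\R^{nk+n+k}$, the set $C_i := \{v\in\{0,1\}^n : W_i v + c_i > 0\}$ must be realizable as the positive side of an affine hyperplane in $\R^n$; by the definition given just before the lemma, this is precisely the condition that $C_i$ is a slicing of the $n$-cube. Conversely, any $k$-tuple of slicings $(C_1,\ldots,C_k)$ yields a (possibly empty, but nonempty for generic choices) open cone of parameters, and on it $\Phi$ is linear.

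On such a cone, the formula for $q(v)$ simplifies to
$$q(v) \,=\, b^\top v \,+\, \sum_{i\,:\,v\in C_i} \bigl(W_i v + c_i\bigr).$$
Reading off the coefficients of the parameters $(b,\, c_1, W_1,\, c_2, W_2,\, \ldots,\, c_k, W_k)$ in this expression, the $v$th row of the associated $2^n\times(nk+n+k)$-matrix is exactly $v$ in the $b$-block (giving $A$) and, in each $(c_i,W_i)$-block, it is $(1,v)$ if $v\in C_i$ and zero otherwise (giving $A_{C_i}$). This is the claimed block form $\mcA=(A\mid A_{C_1}\mid\cdots\mid A_{C_k})$.

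There is no serious obstacle here; the statement is really just bookkeeping applied to the decomposition in the first paragraph. The only mild point to watch is the boundary case when some $W_i v + c_i$ vanishes: such parameters lie on the wall between two regions of linearity, which is why we insisted on strict inequalities in the definition of a slicing, and this is consistent with the open-cone description used in the proof of Proposition \ref{prop:RBM1}.
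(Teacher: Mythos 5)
Your proof is correct and takes essentially the same approach as the paper: identify the regions of linearity of $\Phi$ with $k$-tuples of slicings via the sign pattern of the affine forms $W_i v + c_i$ on the vertices of the $n$-cube, and then read off the coefficient matrix block by block. The only cosmetic difference is that you make the separability $\max_{h\in\{0,1\}^k} h^\top(Wv+c) = \sum_{i=1}^k \max\{0,\,W_i v + c_i\}$ explicit, which the paper leaves implicit in its description of the cones $\{\theta' : C_i(\theta')=C_i(\theta)\}$.
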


\begin{proof}
The tropical map $\Phi : \R^{nk+n+k} \rightarrow \TP^{2^n-1}$ can be 
written as follows:
$$
    \Phi(W,b,c) \quad = \quad
    \bigl(
    \max_{h\in \{0,1\}^k}\{h^{\top}(Wv+c), 0\} \,+\, b^{\top}v
    \,\,\,
        \bigr)_{v\in
      \{0,1\}^n} .
      $$
Consider a parameter vector $\theta$ with coordinates
$$\theta :=(b_1, b_2, \ldots,
b_n, c_1, \omega_{11}, \ldots, \omega_{1n}, c_2, \omega_{21}, \dots,
\omega_{2n}, \ldots, c_k, \omega_{k1}, \ldots, \omega_{kn}).$$ We
associate to this vector the $k$ hyperplanes $H_i(\theta) = \{v \in
\R^n \,:\, \omega_{i1}v_1 + \ldots +\omega_{in}v_n+c_i=0 \}$ for $i =
1,2,\ldots,k$.  Let us assume that $\theta$ is chosen generically.
Then, for each index $i$, we have $\{0,1\}^n\cap
H_i(\theta)=\emptyset$, and we obtain a slicing of the $n$-cube with
$\,C_i(\theta) := \bigl\{ \,v \in \{0,1\}^n \,:
\,\sum_{j=1}^n\omega_{ij} v_j + c_i >0 \,\bigr\}$.  The generic
parameter vector $\theta$ lies in a unique open region of linearity of
the tropical morphism $\Phi$. More precisely, this region 
corresponds to the cone of all $\theta'$ in $\R^{nk+n+k}$ such that
$C_i(\theta) = C_i(\theta')$ for $i=1,2,\ldots,k$.  By construction,
the map $\Phi: \R^{nk+n+k} \rightarrow \R^{2^n}$ is linear on this
cone. Following the definition of $\Phi$ we see that this linear map
is just left multiplication of the vector $\theta$ by a matrix whose
rows are indexed by the observed states $v$ and columns indexed by the
coordinates of $\theta$.  This matrix is precisely the matrix $\mcA$
above, where $C_i = C_i(\theta)$ for $i=1,2,\ldots,k$. The result
follows by continuity of the map~$\Phi$.
\end{proof}

As an immediate consequence of Lemma \ref{lem:matrixForTropicalMorphism}
we obtain the following result:

\begin{theorem} \label{thm:cornercuts} The dimension of the tropical
  RBM model $\,TM^k_n\,$ %plus one
  equals the maximum rank of any matrix of size $2^n \times
  \bigl( nk+n+k \bigr)$ of the form
$$ \mcA \,\, = \,\, \bigl(
\, A \,\,|\,\, A_{C_1} \,\,|\,\, A_{C_2} \,\,|\,\, \cdots \,\,|\,\, A_{C_k} \,\bigr) , $$
where $\{C_1, C_2, \ldots, C_k\}$ is any set of $k$ slicings of the
$n$-cube.
%runs over all possible slicings of the $n$-cube.
\end{theorem}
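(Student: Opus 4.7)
The plan is to combine Lemma~\ref{lem:matrixForTropicalMorphism} with two elementary facts: every $k$-tuple of slicings of the $n$-cube is realized by some non-empty open region of linearity of $\Phi$, and the dimension of a finite union of polyhedral cones equals the maximum of the dimensions of those cones. Since the theorem is billed as an ``immediate consequence'' of the lemma, the work is mostly bookkeeping.

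First, I would recall that by definition $TM^k_n$ is the image in $\TP^{2^n-1}$ of the piecewise-linear map $\Phi$ from~(\ref{eq:Phi}). By Lemma~\ref{lem:matrixForTropicalMorphism}, the parameter space $\R^{nk+n+k}$ decomposes into finitely many open polyhedral cones -- the regions of linearity of $\Phi$ -- each indexed by a $k$-tuple of slicings $(C_1,\ldots,C_k)$ of the $n$-cube, and on the region labelled by $(C_1,\ldots,C_k)$ the map $\Phi$ acts as left-multiplication by the matrix $\mcA = (A \mid A_{C_1} \mid \cdots \mid A_{C_k})$.

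Next, I would verify that every $k$-tuple of slicings is actually realized by a non-empty such region. Given slicings $(C_1,\ldots,C_k)$, the very definition of a slicing provides row vectors $\omega_i \in \R^n$ and scalars $c_i \in \R$ with $\omega_i v + c_i > 0$ precisely for $v \in C_i$; together with an arbitrary $b \in \R^n$ this gives a parameter vector $\theta$ inside the region of linearity attached to $(C_1,\ldots,C_k)$. The defining strict inequalities carve out an open full-dimensional cone in $\R^{nk+n+k}$ around $\theta$, whose image under $\theta \mapsto \mcA\theta$, viewed in $\TP^{2^n-1}$, is an open cone of dimension equal to $\rank(\mcA)$.

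Since $TM^k_n$ is the union of the closures of these images as $(C_1,\ldots,C_k)$ ranges over all $k$-tuples of slicings, and since a finite union of polyhedral cones has dimension equal to the maximum of the dimensions of its pieces, we conclude
\begin{equation*}
\dim(TM^k_n) \,=\, \max_{(C_1,\ldots,C_k)} \rank\bigl(\, A \,\mid\, A_{C_1} \,\mid\, \cdots \,\mid\, A_{C_k} \,\bigr),
\end{equation*}
as claimed. The only point that requires any thought is the realizability of an arbitrary $k$-tuple of slicings, but this is built into the definition, so there is no genuine obstacle to carrying the argument out.
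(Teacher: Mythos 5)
Your argument is correct and matches the paper's, which states the theorem as an immediate consequence of Lemma~\ref{lem:matrixForTropicalMorphism} without further elaboration: the regions of linearity of $\Phi$ cover the parameter space, each maps onto a cone of dimension $\rank(\mcA)$, and a finite union of cones has the maximum of their dimensions. The one detail genuinely worth spelling out is the realizability of an arbitrary $k$-tuple of slicings by a non-empty open region, and you handle it exactly as intended, by observing that the defining hyperplanes for the $C_i$ involve disjoint blocks of parameters and so can be chosen independently.
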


Theorem \ref{thm:cornercuts} furnishes a tool to attack
Conjecture~\ref{conj:main2}. What remains is the
combinatorial problem of finding a suitable collection 
of slicings of the $n$-cube. In what follows we shall apply existing 
results from coding theory to this problem.

There are two quantities from the coding theory literature
\cite{Best77, CoveringCodes, CoverThomas, HuffmannPlessFECC} that are
of interest to us. The first one is $A_2(n,3)$, the size (number of
codewords) of the largest binary code on $n$ bits with each pair of
codewords at least Hamming distance (number of bit flips) $3$ apart.
The second one is $K_2(n,1)$, the size of the smallest {\it covering
  code} on $n$ bits.  In other words, $K_2(n,1)$ is the least number
of codewords such that every string of $n$ bits lies within Hamming
distance one of some codeword. We obtain:

\begin{corollary} \label{cor:codingconn}
The dimension of the tropical RBM model satisfies
\begin{itemize}
\item $\dim TM^k_n \,=\, nk+n+k$ \ for $k < A_2(n,3)$,
\item $\dim TM^k_n \,=\, \min\{nk+n+k,2^n-1\}$ \ for $k = A_2(n,3)$,
\item  $\dim TM^k_n \,\, = \,\,2^n-1$ \ \ \ \ \ for $\,k \geq K_2(n,1)$.
\end{itemize}
\end{corollary}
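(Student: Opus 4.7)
The plan is to apply Theorem~\ref{thm:cornercuts}, which reduces each statement to exhibiting $k$ slicings $C_1,\dots,C_k$ of the $n$-cube whose associated matrix $\mcA = (A \,|\, A_{C_1} \,|\, \cdots \,|\, A_{C_k})$ attains the claimed rank after projecting out the all-ones direction. The central construction takes $C_i$ to be a Hamming ball $B(c_i,1) = \{v : d_H(v,c_i)\le 1\}$ of radius one around a codeword $c_i$. This is a genuine slicing: since $d_H(v,c_i) = \sum_j c_{i,j} + \sum_j (1-2c_{i,j})v_j$ is affine-linear in $v$, the ball $B(c_i,1)$ is cut out by a single halfspace inequality. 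When $\{c_1,\dots,c_k\}$ is a binary code of minimum distance at least $3$, the balls are pairwise disjoint, and each contains $n+1$ vertices $\{c_i,c_i+e_1,\dots,c_i+e_n\}$ whose linear span is all of $\R^n$.

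With such disjoint slicings, the column span of each $A_{C_i}$ equals the coordinate subspace $\R^{C_i}\subseteq\R^{2^n}$, because the $n+1$ functions $1,v_1,\dots,v_n$ form a basis on the $n+1$ affinely-independent vertices of $C_i$. Writing $R:=\{0,1\}^n\setminus\bigcup_i C_i$, one obtains the key identity
\[
\rank(\mcA) \,=\, k(n+1) + \rank(A|_R),
\]
and a direct calculation shows that $\mathbf{1}$ lies in the column span of $\mcA$ if and only if there exists $a\in\R^n$ with $\sum_j a_j v_j = 1$ for every $v\in R$. Since $\dim TM^k_n$ equals $\rank(\mcA)$ when $\mathbf{1}$ is absent from the column span and $\rank(\mcA)-1$ otherwise, we can read off the dimension in each case.

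For the first bullet, $k<A_2(n,3)$ allows us to adjoin a further codeword $c^*$ at distance $\ge 3$ from all $c_i$, so $B(c^*,1)\subseteq R$. The linear span of this extra ball is all of $\R^n$, giving $\rank(A|_R)=n$; and evaluating $a\cdot v=1$ at $c^*$ and at each $c^*+e_j\in R$ forces $a=0$, contradicting $a\cdot c^*=1$. Hence $\mathbf{1}$ is absent from the column span and $\dim TM^k_n = k(n+1)+n = nk+n+k$. For the third bullet, choose a covering code of size $K_2(n,1)$ and augment with arbitrary further slicings if $k$ is larger; then $\bigcup_i C_i = \{0,1\}^n$, so for every $v\in\{0,1\}^n$ the standard basis vector $e_v$ lies in the column span of $A_{C_i}$ for any $i$ with $v\in C_i$. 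Hence the column span of $\mcA$ equals $\R^{2^n}$ and $\dim TM^k_n=2^n-1$.

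The middle bullet, $k=A_2(n,3)$, uses the same setup with a maximum-size code of minimum distance $3$. When $|R|\ge n+1$ (equivalently $nk+n+k\le 2^n-1$), a suitably chosen maximum code places $n+1$ vertices of $R$ in linear general position, so that $\rank(A|_R)=n$ and no affine relation $a\cdot v\equiv 1$ holds throughout $R$; this yields $\dim = nk+n+k$ by the argument of the first bullet. When $|R|\le n$ (equivalently $nk+n+k\ge 2^n$), a suitable maximum code instead makes the rows of $A|_R$ linearly independent, so that $\rank(\mcA)=k(n+1)+|R|=2^n$ and $\dim=2^n-1$. The main obstacle will be verifying the required genericity: one must show that among all maximum distance-$3$ codes at least one has its residual set $R$ in the needed combinatorial position. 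Perturbation of the code, or replacement of one Hamming ball by a closely related non-ball slicing in pathological cases, is expected to close this gap.
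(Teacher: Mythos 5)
Your proposal follows essentially the same route as the paper's proof: both rest on Theorem~\ref{thm:cornercuts}, realize the slicings as radius-one Hamming balls around the codewords of a minimum-distance-$3$ code (respectively a covering code), use disjointness of the balls to get the block count $\rank(\mcA)=k(n+1)+\rank(A|_R)$ with $R$ the set of uncovered vertices, and use one additional disjoint ball inside $R$ to certify $\rank(A|_R)=n$ when $k<A_2(n,3)$. Two of your refinements are genuine improvements in precision over the paper's write-up: you track explicitly whether the all-ones vector lies in the column span of $\mcA$, so the passage to $\TP^{2^n-1}$ costs a dimension exactly when it should (the paper's Theorem~\ref{thm:cornercuts} elides this), and in the covering case you note that each full radius-one ball already spans the whole coordinate subspace $\R^{C_i}$, so a covering forces the column span to be all of $\R^{2^n}$ directly, avoiding the paper's device of pruning overlapping balls to disjoint subslicings. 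The only incomplete step is the one you flag yourself: for $k=A_2(n,3)$ with $nk+n+k\ge 2^n$, you need the residual set $R$ (of size at most $n$) either to consist of linearly independent rows of $A$ or to have rank $|R|-1$ while admitting no $a$ with $a\cdot v\equiv 1$ on $R$; this is a real combinatorial condition on the chosen maximum code (it would fail, for instance, if $R$ were a two-dimensional face of the cube), and the paper asserts the conclusion at this point without further argument, so your proposal matches but does not exceed the paper's level of rigor there.
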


\begin{proof}
  For the first statement, let $k \leq A_2(n,3)-1$ and fix a code with
  minimum distance $\geq 3$. For each codeword let $C_j$ denote its
  Hamming neighborhood, that is, the codeword together with all strings that
  are at Hamming distance $1$. 
 These $k-1$ sets $C_j$, together with
  some Hamming neighborhood in the complement of their union, are
  pairwise disjoint, and each of them corresponds to a a slicing of
  the cube as in Theorem \ref{thm:cornercuts}.  The disjointness of
  the $k$ neighborhoods means that $\,nk+n+k \leq 2^n-1$.  Elementary
  row and column operations can now be used to see that the
  corresponding $2^n \times (nk+n+k)$ matrix $\mcA = (A | A_{C_1} |
  \cdots | A_{C_k})$ has rank $nk+n+k$.  This is because, after such
  operations, $\mcA$ consists of a block of format $n \times n$ and
  $k$ blocks of format $(n+1) \times (n+1)$ along the diagonal. The
  first block has rank $n$ and the remaining $k$ blocks have rank
  $n+1$ each.  The same reasoning is valid for $k = A_2(n,3)$ except
  that it may now happen that $nk+k+n \geq 2^n$. In this case, the 
  $k$ blocks have total rank $k(n+1)$ and together with the first
  $n\times n$ block they give a matrix of maximal rank $\min\{nk+n+k, 2^n-1\}$.

For the third statement, we suppose $C_1, \dots, C_k$ are slicings
with subslicings $C'_i \subseteq C_i$ such that the $C'_i $ are
disjoint and no $n+1$ of the vertices in a given $C_i$ lie in a hyperplane.  Then $\rank(\mcA) \geq n + \sum_{i=1}^k |C'_i|$ by similar arguments.  This is because we may construct the $C_i'$ by pruning neighbors from codewords, and are left with a lower-dimensional Hamming neighborhood which is a slicing.
\end{proof}

The computation of $A_2(n,3)$ and $K_2(n,1)$, both in general and for specific values of $n$, has been an active area of research since the 1950s.  In Table \ref{tab:knownspecialcases} 
we summarize some of the known results for specific values of $n$.  
This table is based on \cite{CoveringCodes, TableofNonlinearBinaryCodes}.
For general values of $n$, the following bounds can be obtained.

\begin{table}
{\small
\begin{tabular}{c|c|c}
$n$ & $k$ $\leq$&$k$ $\geq$\\
\hline
&&\\
5 &$2^2$&$\mathbf{7}$ \\ %geq:Coveringcodes p 165
6 &$2^3$ &$\mathbf{12}$\\%geq:Coveringcodes p 165
7 &$2^4$&$2^4$ \\%geq:Coveringcodes p 166
8 &$\mathbf{2^2} \cdot \mathbf{5}$&$2^5$ \\%geq:Coveringcodes p 166
9 &$\mathbf{2^3} \cdot \mathbf{5}$&$\mathbf{62}$\\
10&$\mathbf{2^3}\cdot\mathbf{9}$&$\mathbf{120}$\\
11&$\mathbf{2^4}\cdot\mathbf{9}$&$\mathbf{192}$\\
12&$2^{8}$&$\mathbf{380}$\\
13&$2^{9}$&$\mathbf{736}$\\
14&$2^{10}$&$\mathbf{1408}$\\
15&$2^{11}$&$2^{11}$\\
16&$\mathbf{2^5}\cdot\mathbf{85}$&$2^{12}$\\
17&$\mathbf{2^6}\cdot\mathbf{83}$&$2^{13}$ \\
18&$\mathbf{2^8}\cdot\mathbf{41}$&$2^{14}$\\
19&$\mathbf{2^{12}}\cdot\mathbf{5}$&$\mathbf{31744}$\\
20&$\mathbf{2^{12}}\cdot\mathbf{9}$&$\mathbf{63488}$\\
21&$\mathbf{2^{13}}\cdot\mathbf{9}$&$\mathbf{122880}$\\
22&$\mathbf{2^{14}}\cdot\mathbf{9}$&$\mathbf{245760}$\\
23&$\mathbf{2^{15}}\cdot\mathbf{9}$&$\mathbf{393216}$\\
24&$2^{19}$ &$\mathbf{786432}$\\
25&$2^{20}$ &$\mathbf{1556480}$\\
26&$2^{21}$ &$\mathbf{3112960}$\\
27&$2^{22}$ &$\mathbf{6029312}$\\
28&$2^{23}$ &$\mathbf{12058624}$\\
29&$2^{24}$ &$\mathbf{23068672}$\\
30&$2^{25}$ &$\mathbf{46137344}$\\
31&$2^{26}$&$2^{26}$ \\
32&$\mathbf{2^{20}}\cdot\mathbf{85}$ &$2^{27}$ \\
33&$\mathbf{2^{21}}\cdot\mathbf{85}$&$2^{28}$\\
\end{tabular}
\qquad
\begin{tabular}{c|c}
$n$ & $k$ $\leq$\\
\hline
&\\
35&$\mathbf{2^{23}}\cdot\mathbf{83}$\\
37&$\mathbf{2^{26}}\cdot\mathbf{41}$\\
39&$\mathbf{2^{31}}\cdot\mathbf{5}$\\
47&$\mathbf{2^{38}}\cdot\mathbf{9}$\\
63&$2^{57}$\\
70&$\mathbf{2^{43}}\cdot \mathbf{1657009}$\\
71&$\mathbf{2^{63}}\cdot\mathbf{3}$\\
75&$\mathbf{2^{63}}\cdot\mathbf{41}$\\
79&$\mathbf{2^{70}}\cdot\mathbf{5}$\\
95&$\mathbf{2^{85}}\cdot\mathbf{9}$\\
127&$2^{120}$\\
141&$\mathbf{2^{113}}\cdot\mathbf{1657009}$\\
143&$\mathbf{2^{134}}\cdot\mathbf{3}$\\
151&$\mathbf{2^{138}}\cdot\mathbf{41}$\\
159&$\mathbf{2^{149}}\cdot\mathbf{5}$\\
163&$\mathbf{2^{151}}\cdot\mathbf{19}$\\
191&$\mathbf{2^{180}}\cdot\mathbf{9}$\\
255&$2^{247}$\\
270&$\mathbf{2^{202}}\cdot \mathbf{1021273028302258913}$\\
283&$\mathbf{2^{254}}\cdot \mathbf{1657009}$\\
287&$\mathbf{2^{277}}\cdot \mathbf{3}$\\
300&$\mathbf{2^{220}}\cdot\mathbf{3348824985082075276195}$\\
303&$\mathbf{2^{289}}\cdot\mathbf{41}$\\
319&$\mathbf{2^{308}}\cdot\mathbf{5}$\\
327&$\mathbf{2^{314}}\cdot\mathbf{19}$\\
383&$\mathbf{2^{371}}\cdot\mathbf{9}$\\
511&$2^{502}$\\
512&$\mathbf{2^{443}}\cdot\mathbf{1021273028302258913}$\\
&\\
\end{tabular}
}
\caption{Special cases where Conjecture 
\ref{conj:main2} holds, based on \cite{CoveringCodes, TableofNonlinearBinaryCodes} and Corollary \ref{cor:codingconn}.  Bold entries show improvements made by various researchers on the bounds provided by Corollary \ref{cor:codingbounds}.   For example, for $n=19$, $TM^k_n$ has the expected dimension if $k \leq 2^{12}\cdot 5=20480 $ and dimension $2^n-1$ if $k \geq 31744$, while the Corollary \ref{cor:codingbounds} bounds are $2^{14}=16384$ and $2^{15}=32768$, respectively.  The $k\leq$ column lists lower bounds on $A_2(n,3)$ while the $k \geq$ column lists upper bounds on $K_2(n,1)$. 
} \label{tab:knownspecialcases}
\end{table}

\begin{proposition} \label{prop:codebounds}
For binary codes with $n \geq 3$, the Varshamov bound holds:
\[
A_2(n,3) \,\,\geq \,\, 2^{n - \ceil{\log_2(n+1)}}.
\]
For covering codes,  the following inequality holds:
\[
K_2(n,1) \,\,\leq \,\, 2^{n-\floor{\log_2(n+1)}}.
\]
For $n =2^{\ell}-1$ with $ \ell \geq 3 $,
we have the equality $\,A_2(n,3) = K_2(n,1)=2^{2^{\ell}-\ell-1}$.
\end{proposition}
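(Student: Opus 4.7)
The plan is to prove all three statements by exhibiting explicit codes built from the binary Hamming codes, and then using the sphere-packing and sphere-covering bounds to obtain matching upper bounds in the perfect case. Let $H_\ell$ denote the Hamming code of length $N_\ell := 2^\ell - 1$: this is a linear binary code with $2^{N_\ell - \ell}$ codewords, minimum distance $3$, and covering radius $1$ (it is a perfect $1$-error-correcting code).

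For the lower bound on $A_2(n,3)$, I would set $\ell = \lceil \log_2(n+1) \rceil$, so $N_\ell \geq n$, and then \emph{shorten} $H_\ell$ down to length $n$. Concretely, one fixes $N_\ell - n$ coordinate positions, keeps only the codewords that are zero on those positions, and deletes those positions. Shortening preserves (and can only increase) the minimum distance, and it decreases the binary logarithm of the code size by at most one per deleted coordinate, so the resulting code has length $n$, minimum distance $\geq 3$, and at least $2^{(N_\ell - \ell) - (N_\ell - n)} = 2^{n - \ell}$ codewords. This yields $A_2(n,3) \geq 2^{n - \lceil \log_2(n+1)\rceil}$.

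For the upper bound on $K_2(n,1)$, I would set $\ell = \lfloor \log_2(n+1) \rfloor$, so $N_\ell \leq n$, and \emph{extend} $H_\ell$ by taking the direct product with a full cube: $C' := H_\ell \times \{0,1\}^{n - N_\ell}$. For any $(v,u) \in \{0,1\}^{N_\ell} \times \{0,1\}^{n - N_\ell}$, perfectness of $H_\ell$ gives $c \in H_\ell$ with Hamming distance $d(v,c) \leq 1$, and then $(c,u) \in C'$ satisfies $d((v,u),(c,u)) \leq 1$. Thus $C'$ is a $1$-covering code of $\{0,1\}^n$ with $|C'| = 2^{n - N_\ell} \cdot 2^{N_\ell - \ell} = 2^{n - \ell}$ elements, giving $K_2(n,1) \leq 2^{n - \lfloor \log_2(n+1)\rfloor}$.

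For the equality $A_2(n,3) = K_2(n,1) = 2^{2^\ell - \ell - 1}$ when $n = 2^\ell - 1$ with $\ell \geq 3$, note that $\lceil \log_2(n+1)\rceil = \lfloor \log_2(n+1)\rfloor = \ell$, so both constructions above, applied with no shortening or extension, simply produce $H_\ell$ itself. This already shows $A_2(n,3) \geq 2^{n-\ell}$ and $K_2(n,1) \leq 2^{n-\ell}$. The matching bounds come from sphere counting: the Hamming balls of radius $1$ have size $n+1 = 2^\ell$, so disjointness of such balls around codewords of a distance-$3$ code gives $A_2(n,3) \cdot 2^\ell \leq 2^n$, while the covering condition forces $K_2(n,1) \cdot 2^\ell \geq 2^n$. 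Both yield the bound $2^{n-\ell}$ in the opposite direction, proving the claimed equalities. There is no real obstacle here beyond remembering the two standard operations (shortening and direct-product extension) and checking that they interact correctly with the distance and covering properties; the only subtlety is making sure that the choices $\ell = \lceil \log_2(n+1)\rceil$ and $\ell = \lfloor \log_2(n+1)\rfloor$ are the optimal integers making $N_\ell \geq n$ and $N_\ell \leq n$ respectively.
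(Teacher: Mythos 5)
Your proof is correct, and it is more self-contained than the paper's. For the upper bound on $K_2(n,1)$ you and the paper do essentially the same thing: the paper covers each of the $2^{n-\underline{n}}$ faces of the $n$-cube with a copy of the perfect Hamming code of length $\underline{n}=2^{\floor{\log_2(n+1)}}-1$, which is exactly your direct product $H_\ell\times\{0,1\}^{n-N_\ell}$. The difference lies in the other two claims: the paper simply cites the literature for the Varshamov bound on $A_2(n,3)$ and for the identity $A_2(n,3)=K_2(n,1)=2^{2^\ell-\ell-1}$ at the perfect lengths, whereas you supply actual arguments --- shortening the next larger Hamming code to get a distance-$3$ code of size $2^{n-\ceil{\log_2(n+1)}}$ (which indeed realizes the Varshamov bound for $d=3$, since shortening halves the code size at most once per deleted coordinate and cannot decrease the minimum distance), and sphere-packing/sphere-covering counts with balls of size $n+1=2^\ell$ to pin down both quantities at $n=2^\ell-1$. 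Both added arguments are standard and check out (your equality argument even works for $\ell=2$, so the paper's restriction to $\ell\geq 3$ is not needed for this step); what your route buys is a proof readable without consulting the coding-theory references, at the cost of a slightly longer write-up.
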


\begin{proof}
  A proof of the Varshamov bound on $A_2(n,3)$ may be found in
  \cite{HuffmannPlessFECC}.  The last statement holds because
  $A_2(n,3)=K_2(n,1)$ for perfect Hamming codes: for every $\ell \geq
  3$ there is a perfect $(2^{\ell}-1,2^{\ell}-\ell -1,3)$ Hamming code
  (i.e. a perfect Hamming code on $2^{\ell}-1$ bit, of size
  $2^{\ell}-\ell -1$, and with Hamming distance $3$). For a proof of
  this result, see \cite{CoverThomas}. Additionally, we have
  $K_2(2^m-1,1)=2^{2^m-m-1}$ for $m \geq 3$; see \cite{CoveringCodes}.

  The simple upper bound on $K_2(n,1)$ can be obtained by using
  overlapping copies of the next smallest Hamming code.  Suppose $n
  \neq 2^{\ell'}-1$ for any $\ell'$, i.e. $n$ is between \emph{Hamming
    integer numbers} (i.e.~integers of the form $2^{\ell}-1$).  Let $\un$ be
  the next smallest Hamming integer $n$, with
  $\ell=\floor{\log_2(n+1)}$, so $\un = 2^{\ell}-1$.  The number of
  hidden nodes needed to cover the $\un$-cube is exactly $K(\un,1) =
  2^{{2^{\ell}} -\ell - 1}$.  We may use the $\un$ codes to cover each
  of the $2^{n-\underline{n}}$ faces of the $n$-cube with
  $2^{\underline{n}}$ vertices, although we will have overlaps.  That
  is,
\begin{equation} \label{eq:inbetweenk1}
K_2(n,1) \, \leq \, K_2(\un,1) \cdot 2^{n-\un}.
\end{equation}
Taking $\log_2$ in the inequality \eqref{eq:inbetweenk1}, we obtain 
$$ \log_2  K_2(n,1) \,\,\leq\,\,  \log_2 ( K_2(\un,1)2^{n-\un}) \,\,= \,\,
                             n-\floor{\log_2(n+1)}.
$$
This implies $\,
K_2(n,1)\,\, \leq  \,\, 2^{n- \floor{\log_2(n+1)}}$.
\end{proof}

Our method results in the following upper and lower bounds for arbitrary values of $n$.
Note that the bound is tight if $n+1$ is a power of $2$. Otherwise
there might be a multiplicative gap of up to $2$ between the lower and upper bound.
In addition to these
general bounds, we have the specific results recorded in Table \ref{tab:knownspecialcases}.
 
\begin{corollary} \label{cor:codingbounds}
The coding theory argument leads to the following bounds:
\begin{itemize}
\item If $\,k \,<\, 2^{n - \ceil{\log_2(n+1)}}$, then $\,\dim TM^k_n = nk+n+k$.
\item If $\,k \,=\, 2^{n - \ceil{\log_2(n+1)}}$, then $\,\dim TM^k_n = \min\{nk+n+k, 2^n-1\}$.
\item  If $\,k \,\geq \, 2^{n-\floor{\log_2(n+1)}}$, then $\,\dim TM^k_n = 2^n-1$.
\end{itemize}
\end{corollary}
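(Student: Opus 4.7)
The plan is to derive this corollary by combining the dimension criterion of Corollary \ref{cor:codingconn} with the explicit numerical estimates of Proposition \ref{prop:codebounds}. Since both of those results are already established, the three bullets of Corollary \ref{cor:codingbounds} follow by pure monotonicity, with the only mild subtlety lying in the threshold case $k = 2^{n-\ceil{\log_2(n+1)}}$ of the middle bullet.

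The outer two bullets are immediate. For the first, if $k < 2^{n-\ceil{\log_2(n+1)}}$, then the Varshamov lower bound $A_2(n,3) \geq 2^{n-\ceil{\log_2(n+1)}}$ from Proposition \ref{prop:codebounds} gives $k < A_2(n,3)$, and the first case of Corollary \ref{cor:codingconn} yields $\dim TM^k_n = nk+n+k$. For the third, if $k \geq 2^{n-\floor{\log_2(n+1)}}$, the covering code bound $K_2(n,1) \leq 2^{n-\floor{\log_2(n+1)}}$ gives $k \geq K_2(n,1)$, and the third case of Corollary \ref{cor:codingconn} yields $\dim TM^k_n = 2^n-1$.

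For the middle bullet, I would argue by cases on whether equality holds in the Varshamov bound. The bound gives $k = 2^{n-\ceil{\log_2(n+1)}} \leq A_2(n,3)$. If $k = A_2(n,3)$, the second case of Corollary \ref{cor:codingconn} directly gives $\dim TM^k_n = \min\{nk+n+k, 2^n-1\}$. If instead $k < A_2(n,3)$, the first case gives $\dim TM^k_n = nk+n+k$; one then observes that the sphere-packing bound $A_2(n,3) \leq 2^n/(n+1)$ forces $k \leq 2^n/(n+1) - 1$, hence $k(n+1) + n \leq 2^n - 1$, i.e.~$nk+n+k \leq 2^n-1$, so the expression already equals $\min\{nk+n+k, 2^n-1\}$. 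Either way, the claim holds.

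No significant obstacle is expected: the hard mathematics is already encoded in the rank computation of Corollary \ref{cor:codingconn} (which translates the existence of suitable binary codes into tropical dimension estimates) and in the classical coding-theoretic inequalities of Proposition \ref{prop:codebounds}. The present corollary is a bookkeeping synthesis converting implicit coding-theoretic thresholds into explicit, verifiable numerical bounds in terms of $n$ alone.
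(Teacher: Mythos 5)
Your proof is correct and follows the same route the paper intends: Corollary \ref{cor:codingbounds} is stated there without a separate proof, as an immediate combination of Corollary \ref{cor:codingconn} with the Varshamov and covering-code bounds of Proposition \ref{prop:codebounds}. Your extra case analysis in the middle bullet, invoking the sphere-packing bound to guarantee $nk+n+k \leq 2^n-1$ when $k < A_2(n,3)$, correctly handles the one point the paper leaves implicit (it is also built into the proof of Corollary \ref{cor:codingconn}, where the disjointness of the Hamming neighborhoods already forces this inequality).
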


\begin{proof}[Proof of Theorem \ref{thm:looksreasonable}]
This is now easily completed
by combining Corollary \ref{cor:codingbounds}
with the inequalities in (\ref{InEqChain}).
\end{proof}

We close this section with the remark that the use of Hamming codes
is a standard tool in the study of dimensions of secant varieties.
We learned this technique from Tony Geramita. For a review of the
relevant literature see \cite{tropicalDraisma}. It is important to note that,
in spite of the combinatorial similarities, the varieties we study here are different
and more complicated than higher secant varieties of Segre varieties.

\section{Polyhedral Geometry of Parametric Inference}
\label{sec:polyh-geom-param}
The tropical model $TM^k_n$ is not just a
convenient tool for estimating the dimension of the statistical model $M^k_n$.  It is also 
of interest as the geometric object that organizes the space of inference functions which the model can compute.  This statistical interpretation of tropical spaces was introduced in
\cite{TGSM}  and further developed in
\cite{InferenceFunctions, ASCB}. We shall now discuss this perspective 
for the RBM model.

Given an RBM model with fixed parameters learned by some estimation procedure and an
observed state $v$, we want to infer which value $\hat{h}$ of the hidden data  maximizes $\text{Prob}(h\mid v)$.  The inferred string $\hat{h}$  might be used in classification or as the input data for another RBM in a deep architecture.  Such a vector of hidden states
is called an \emph{explanation} of the observation $v$. Each choice of
parameters $\theta=(b,W,c)$ defines an \emph{inference function}
$I_\theta $ sending $v \mapsto \hat{h}$. 
The value $I_{\theta}(v)$ equals the hidden string $h \in \{0,1\}^k$ 
that attains the maximum in the tropical polynomial
\begin{equation}
 \max_{h\in \{0,1\}^k}\{h^\top W v+c^{\top}h+b^{\top}
v\} \,= \, b^{\top} v \,\, + \! \max_{h\in \{0,1\}^k}\{h^\top W v +c^\top h\}.\label{eq:tropPsi}
\end{equation}

In order for the inference function $I_\theta$ to be well-defined,
it is necessary (and sufficient) that
$\theta = (b,W,c)$ lies in an open cone of linearity of the
tropical morphism $\Phi$. In that case, the maximum in
equation~\eqref{eq:tropPsi} is attained for a unique value of $h$. That $h$
can be recovered from the expression of $\Phi$ as we vary the
parameters in the fixed cone of linearity. Thus, the inference
functions are in one-to-one correspondence with the regions of
linearity of the tropical morphism $\Phi$.

The RBM model grew out of work on artificial neurons modeled as linear threshold functions \cite{MinksyPapert, Rosenblatt}, and we pause our geometric discussion to make a few remarks about these functions and the types of inference functions that our model can represent.  
A \emph{linear threshold function} is a function
$\{0,1\}^n\to \{0,1\}$ defined by choosing a weight vector
$\omega$ and a target weight $\pi$. For any point $v\in \{0,1\}^n$ we
compute the value $\omega  v$, we test if this quantity is at
most $\pi$ or no, and we assign value $0$ or $1$ to $v$ depending on $\pi \geq
\omega  v$ or $\pi < \omega v$. The weights $\omega, \pi $ define a 
hyperplane in $\R^n$ such that the vertices of the $n$-cube lie on the ``true'' or
``false'' side of the hyperplane. Using the linear threshold functions, we construct
a $k$-valued function $\{0,1\}^n\to \{0,1\}^k$ where we replace the
weight vector $\omega$ by a $k\times n$ matrix $W$ and the target
weight $\pi $ by a vector $\pi \in \R^k$. More precisely, the function
assigns a vertex of the $k$-cube where the $i$-th coordinate equals 0
if $(Wv)_i\geq \pi_i$ and $1$ if not.  
Our discussion of slicings of the $n$-cube in Section~4 implies
the following observation:

\begin{proposition}
The inference functions for the restricted Boltzmann machine model
  $M^k_n$ are precisely those Boolean functions $\{0,1\}^n
  \rightarrow \{0,1\}^k$ for which each of the $k$ coordinate
  functions  $\{0,1\}^n \rightarrow \{0,1\}$  is a linear threshold function.
\end{proposition}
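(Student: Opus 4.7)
The plan is to exploit the fact that the maximization in~(\ref{eq:tropPsi}) decouples bit-by-bit in $h$. Since the objective is affine-linear in $h$ and the constraint set is the vertex set of the $k$-cube, I would rewrite
\[
\max_{h \in \{0,1\}^k}\bigl\{ h^\top W v + c^\top h + b^\top v \bigr\} \,=\, b^\top v \,+\, \sum_{i=1}^k \max_{h_i \in \{0,1\}} h_i \bigl( (Wv)_i + c_i \bigr),
\]
which separates into $k$ independent one-bit optimizations, each of which amounts to a threshold test on $(Wv)_i + c_i$.

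For the forward direction, I would fix $\theta = (b,W,c)$ in an open cone of linearity of the tropical morphism $\Phi$, which by Lemma~\ref{lem:matrixForTropicalMorphism} amounts to requiring that the $k$ hyperplanes $\{v \in \R^n : (Wv)_i + c_i = 0\}$ miss every vertex of the $n$-cube. Then the unique argmax satisfies $\hat{h}_i = 1$ exactly when $(Wv)_i + c_i > 0$, so the $i$-th coordinate $v \mapsto I_\theta(v)_i$ of the inference function is precisely the linear threshold function with weight vector equal to the $i$-th row of $W$ and threshold $-c_i$.

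For the converse, given any $k$-tuple $(f_1,\ldots,f_k)$ of linear threshold functions on $\{0,1\}^n$, I would pick weights $\omega_i \in \R^n$ and thresholds $\pi_i \in \R$ realizing each $f_i$, assemble $\theta$ by taking $W$ to have rows $\omega_i$, setting $c_i = -\pi_i$, and choosing $b$ arbitrarily. The coordinate-wise argmax computation above then produces $I_\theta = (f_1,\ldots,f_k)$.

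The only point that needs care, and the mild obstacle I anticipate, is ensuring that this $\theta$ lies in an open cone of linearity of $\Phi$ so that $I_\theta$ is well-defined in the first place. I would dispatch this by observing that every linear threshold function $f_i$ is realized by some hyperplane strictly separating $f_i^{-1}(0)$ from $f_i^{-1}(1)$; hence if the nominal hyperplane $\omega_i \cdot v = \pi_i$ happens to contain a vertex of the $n$-cube, a sufficiently small generic perturbation of $(\omega_i, \pi_i)$ computes the same Boolean function while making $C_i(\theta)$ a genuine slicing in the sense of Section~4. Performing this perturbation independently for each $i$ puts $\theta$ in an open region of linearity, completing both directions.
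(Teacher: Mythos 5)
Your proof is correct and follows essentially the same route as the paper, which offers no formal proof but simply points to the Section~4 discussion: inference functions correspond to open cones of linearity of $\Phi$, these correspond via Lemma~\ref{lem:matrixForTropicalMorphism} to $k$-tuples of slicings $C_i(\theta)=\{v: (Wv)_i+c_i>0\}$, and slicings are exactly the acceptance sets of linear threshold functions. Your explicit bit-by-bit decoupling of the maximum and the perturbation argument for threshold functions whose defining hyperplane meets a cube vertex merely spell out details the paper leaves implicit.
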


Most Boolean functions are not linear threshold functions,
that is, are not inference functions for the model $M^1_n$. For example, the 
parity function cannot be so represented.  To be precise, while the number
of all Boolean functions is $2^{2^n}$, 
it is known \cite{Ojha} that for $n \geq 8$
the number $\lambda(n)$ of
linear threshold functions satisfies 
$$2^{{n \choose 2} +16}< \lambda(n)  \leq 2^{n^2}.$$
The exact number $\lambda(n)$ of linear threshold functions has been computed 
for up to $n=8$. 
The {\em On-Line Encyclopedia of Integer Sequences}~\cite[A000609]{OEIS}
reveals
 \begin{equation}
\label{LTF}
\lambda(1 \ldots 8)\, = \,  4, 14, 104, 1882, 94572, 15028134, 8378070864, 17561539552946. \!\!
 \end{equation}
Combining $k$ such functions for $k \geq 2$ yields $\lambda(n)^k=2^{\Theta (kn^2)}$ 
possible inference functions for the RBM model $M^k_n$. This number
grows exponentially in the number of model parameters. This is consistent
with the result of Elizalde and Woods in \cite{InferenceFunctions}
which states that the number of inference functions of a graphical model
grows polynomially in the size of the graph when the number of
parameters is fixed.

In typical implementations of RBMs using  IEEE 754 doubles, the size in bits of the representation is $64(nk+n+k)$.  Thus the number $2^{\Theta (kn^2)}$ of inference functions representable by a theoretical RBM $M^k_n$ will eventually outstrip the number $2^{64(nk+n+k)}$ representable in a fixed-precision implementation; for example with $k=100$ hidden nodes, this happens at $n\geq 132$.  As a result, the size of the regions of linearity will shrink to single points in floating point representation.  This is one possible contributor to the difficulties that have been encountered in scaling RBMs.

The tropical point of view allows us to organize the geometric
information of the space of inference functions into the tropical
model $TM^k_n$, which can then be analyzed with the tools of tropical
and polyhedral geometry.  We now describe this geometry in the case
$k=1$.  Geometrically, we can think of the linear threshold functions
as corresponding to the vertices of the $(n+1)$-dimensional zonotope
corresponding to the $n$-cube.  This zonotope is the Minkowski sum in
$\R^{n+1}$ of the $2^n$ line segments $[(1,{\bf 0}),(1,v)]$ where $v$ ranges
over the set $\{0,1\}^n$.

The quantity $\lambda(n)$ is the number of vertices
of these zonotopes, and their facet numbers were  computed by
Aichholzer and Aurenhammer \cite[Table
2]{HyperplanesInHypercubes}. They are
\begin{equation}
\label{auren}
  4, 12, 40, 280, 6508, 504868, 142686416, 172493511216
  , \ldots \quad \qquad \quad  \end{equation}
  For example, the second entry in  (\ref{LTF}) and (\ref{auren})
   refers to a $3$-dimensional
  zonotope known as the {\em rhombic dodecahedron},
  which has $12$ facets and $\lambda(2) = 14$ vertices.
 Likewise, the third entry in   (\ref{LTF}) and (\ref{auren})
 refers to a $4$-dimensional zonotope
  with $40$ facets and $\lambda(3) = 104$ vertices.
The normal fan of that zonotope is an arrangement of eight hyperplanes,
indexed by $\{0,1\}^3$,
 which partitions $\R^4$ into $104$ open convex
 polyhedral cones.
 That partition lifts to a partition of the parameter space $\R^7$
 for $M^1_3$ whose cones are precisely the regions
 on which the tropical morphism $\Phi$  is linear.
 The image of that morphism is the 
 first non-trivial tropical RBM model $TM^1_3$. 
 This model has the expected dimension $7$ and it happens to be a pure fan.
 
\begin{figure}[ht]
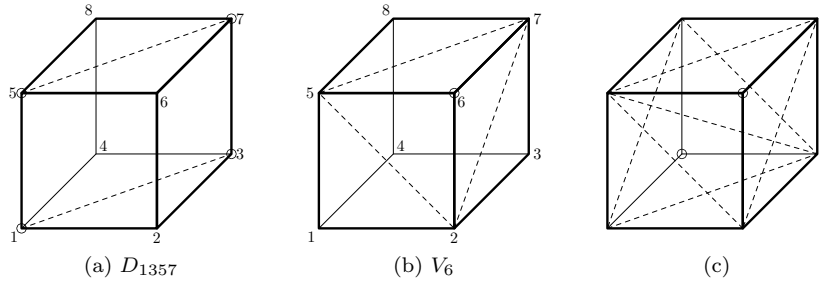
%\centering
\subfloat[$D_{1357}$]
{\includegraphics[scale=0.6]{triangulation3Cube1.mps}
    \label{subfig:1}}\qquad
\subfloat[$V_6$]
{\includegraphics[scale=0.6]{triangulation3Cube0.mps}\label{subfig:0}}\qquad
\subfloat[]%[Maximal cell]% and its dual graph]
{\includegraphics[scale=0.6]{triangulation3Cube2.mps}%\quad \includegraphics[scale=0.35]{triangulation3Cube3.mps}
\label{subfig:2}
}\caption{Subdivisions of the 3-cube that represent vertices
and facets of $TM^1_3$}\label{fig:subdivisions3Cube}
\end{figure}

\begin{figure}[ht]
  \centering
  \includegraphics{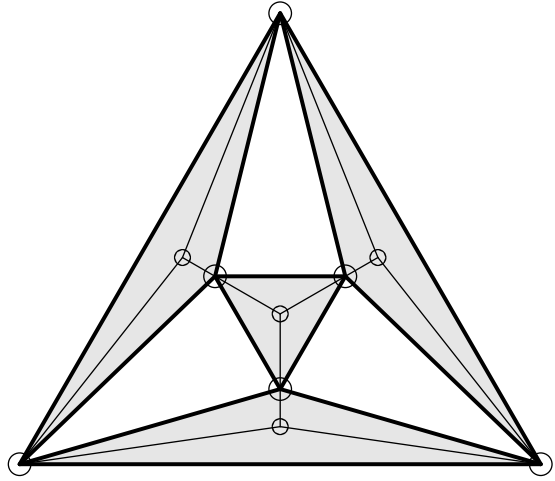}
  \caption{The tropical model $TM^1_3$ is glued
  from four triangulated bipyramids.
In this octahedron graph, each
  of the bipyramids is represented by
  a shaded triangle.}\label{fig:TM^1_3}.
 \end{figure}

 \begin{example} \label{ex52}
   The tropical RBM model $TM^1_3$ is a $7$-dimensional fan whose
   lineality space is $3$-dimensional. It is a subfan of the secondary
   fan of the $3$-cube \cite[Corollary 2.2]{Develin}.  The secondary
   fan of the $3$-cube can be represented as a $3$-dimensional polyhedral
   sphere with $ f$-vector $(22,100,152,74)$. The $74$ facets of
   that $3$-sphere correspond to
   triangulations of the $3$-cube.  The tropical model $TM^1_3$
   consists of all regular subdivisions of the $3$-cube with two regions
   covering all eight vertices.  It sits inside the polyhedral $3$-sphere
   as a \emph{simplicial} subcomplex with  $f$-vector $(14,40,36,12)$.  
   Its $12$ facets (tetrahedra)
           correspond to a single triangulation type of the
  $3$-cube as depicted in Figure~\ref{subfig:2}.
The $14$ vertices of $TM^1_3$ come in two families: six vertices $D_j$
corresponding to diagonal cuts, as in Figure~\ref{subfig:1}, and eight
vertices $V_i$ representing corner cuts, as in Figure~\ref{subfig:0}. The
edges come in three families: four edges $V_iV_j$ corresponding to
pairs of corner cuts at antipodal vertices of the cube,
twenty-four edges  $V_iD_j$, and twelve edges $D_iD_j$. Finally, 
of the four possible triangles, only two types are present: the ones
with two vertices of different type. Thus, they are $12$ triangles
$V_iV_jD_k$ and $24$ triangles $V_iD_jD_k$. 

Figure~\ref{fig:TM^1_3} depicts the simplicial complex $TM^1_3$
which is pure of dimension $3$. The
six vertices $D_i$ and the twelve edges $D_jD_k$ 
form the edge graph of an octahedron. The four nodes 
interior to the shaded triangles represent pairs of vertices $V_i$
that are joined by an edge. Each of the shaded triangles
represents three tetrahedra that are glued together along a common edge
$V_iV_j$. Thus the twelve tetrahedra in $TM^1_3$ come as  four
triangulated bypiramids. The four bypiramids  are then glued
into four of the triangles in the octrahedron graph.
Our analysis shows that the complex $TM^1_3$
has reduced homology
 concentrated in degree $1$ and it has rank~ $3$.
\qed
 \end{example}

The previous example is based on the fact that
the image of the tropical map $\Phi : \R^{2n+1} \rightarrow \R^{2^n} $ is a subfan
of the secondary fan of the $n$-cube. However, it is important to 
note that $\Phi$ is {\bf not} a morphism of fans with respect to the natural fan
structure on the parameter space $\R^{2n+1}$ given
by the slicings of the $n$-cube.

\begin{example}
Consider the case $n=2$. Here $M^1_{2}$ equals $\R^4$ with
its secondary fan structure coming from the two triangulations of the square.
Modulo lineality, this fan is simply the 
standard fan structure $\{\R_{\leq 0} , \{0\}, \R_{\geq 0}\}$ on the real line.
The fan structure on the parameter space $\R^7$ has $14$ maximal cones.
Modulo lineality, this is the normal fan of the rhombic dodecahedron,
 i.e.~a partition of  $\R^3$ into $14$ open convex cones
 by an arrangement of four  planes through the origin.
       Ten of these $14$ open cones are mapped onto cones,
 namely,  four are mapped onto $\R_{\leq 0}$,
 two are  mapped onto $\{0\}$,  and four onto
 $\R_{\geq 0}$. The remaining four cones are mapped 
 onto $\R^1$, so $\Phi$ does not respect the fan structures
 relative to these four cones.
 
 The situation is analogous for $n=3$ but more complicated.  The
 tropical map $\Phi$ is injective on precisely eight of the $104$
 maximal cones in the parameter space. These eight cones are the
 slicings shown on Figure~\ref{subfig:1}.  The map $\Phi$ is injective
 on such a cone, but the cone is divided into three subcones by the
 secondary fan structure on $M^1_3$.  The resulting $24 = 3 \cdot 8$
 maximal cells in the parameter space are mapped in a 2-to-1 fashion
 onto the $12$ tetrahedra in Figure~\ref{fig:TM^1_3}.  It would be
 worthwhile to study the combinatorics of the graph of $\,\Phi\,$ for
 $n \geq 3$.  \qed
 \end{example}

\section*{Acknowledgments}
We thank  Jan Draisma, JM Landsberg,  Honglak Lee, Sorgey Norin, Lior Pachter,
Seth Sullivant, Ilya Sutskever, Jenia Tevelev, and Piotr Zwiernik
for helpful discussions. Special thanks go to Anders Jensen
for computations he did for~us.

\bibliographystyle{amsalpha}

\end{document}